\newtheorem{theorem}{Theorem}
 \newtheorem{definition}{Definition}
\newtheorem{corollary}{Corollary}
\theoremstyle{definition}
\newtheorem{observation}{Observation}
\DeclareMathOperator*{\argmax}{arg\,max}
\DeclareMathOperator*{\argmin}{arg\,min}
\definecolor{DSgray}{cmyk}{0,1,0,0}
\begin{document}
   \begin{frontmatter}

\title{The Impact of Isolation Kernel\\ on Agglomerative Hierarchical Clustering Algorithms}

 \cortext[mycorrespondingauthor]{Corresponding author}

\author[poli]{Xin Han}
\ead{xin.han@tulip.org.au}
\author[deakin]{Ye Zhu \corref{mycorrespondingauthor}}
\ead{ye.zhu@ieee.org}
\author[kaiming]{Kai Ming Ting}
\ead{tingkm@nju.edu.cn}
\author[deakinSRC]{Gang Li}
\ead{gang.li@deakin.edu.au}
\address[poli]{School of Computer Science, Xi'an Shiyou University, Shaanxi, China 710065}
\address[deakin]{School of Information Technology, Deakin University, Geelong, Australia}
\address[deakinSRC]{Centre for Cyber Security Research and Innovation, Deakin University, Geelong, Australia}
\address[kaiming]{School of Artificial Intelligence, Nanjing University, Nanjing, China 210023}

\begin{abstract}
Agglomerative hierarchical clustering (AHC) is one of the popular clustering approaches. Existing AHC methods, which are based on a distance measure, have one key issue: it has difficulty in identifying adjacent clusters with varied densities, regardless of the cluster extraction methods applied on the resultant dendrogram. In this paper, we identify the root cause of this issue and show that the use of a data-dependent kernel (instead of distance or existing kernel) provides an effective means to address it. We analyse the condition under which existing AHC methods fail to extract clusters effectively; and the reason why the data-dependent kernel is an effective remedy.  This leads to a new approach to kernerlise existing hierarchical clustering algorithms such as  existing traditional AHC algorithms, HDBSCAN, GDL and PHA. In each of these algorithms, our empirical evaluation shows that a recently introduced Isolation Kernel produces a higher quality or purer dendrogram  than distance, Gaussian Kernel and adaptive Gaussian Kernel. 
\end{abstract}

\begin{keyword}
Agglomerative hierarchical clustering, Varied densities, Dendrogram purity, Isolation kernel, Gaussian kernel.
\end{keyword}

\end{frontmatter}
\section{Introduction}
 
Hierarchical clustering is one of the widely used clustering methods~\citep{jainDataClusteringReview1999,gilpin2013formalizing,cohen2019hierarchical}.
Given a set of data points,
the goal of hierarchical clustering is not to find a single partitioning of the data, but a hierarchy of subclusters in a dendrogram. Because its clustering output of a dendrogram is easy to interpret,
hierarchical clustering has been  used in a wide range of applications,
e.g., social networks analysis~\citep{rajaramanMiningMassiveDatasets2011},
bioinformatics~\citep{diezpalacioNovelBrainPartition2015},
text classification~\citep{Malik2010,Zhao2005}
and financial market analysis~\citep{tumminelloCorrelationHierarchiesNetworks2010}.

The most widely used hierarchical clustering
approach is agglomerative hierarchical clustering (AHC)~\citep{jainDataClusteringReview1999}.
It  starts with subclusters of individual points,
and then iteratively merges two most similar subclusters,
based on a \textit{linkage function} which measures the similarity of two subclusters,
until all points belong to a single cluster.
AHC has been studied in the theoretical community and
used by practitioners~\citep{wu2009towards,heller2005bayesian,Dasgupta2016CostFunction,cohen2019hierarchical}.
 
The linkage function used in an AHC relies on a distance (or similarity) measure.
Many distanced-based linkage functions have been proposed for hierarchical clustering, such as complete linkage~\citep{kingStepwiseClusteringProcedures1967} and average linkage~\citep{sokal1958u}. In addition, in order to capture complex structures in the data, the graph-structural agglomerative clustering algorithms such as GDL~\citep{zhang2012graph} and PIC~\citep{zhang2013agglomerative} have used a linkage function based on a $k$-nearest-neighbour graph. 

This research is motivated by the current state of two partitioning clustering research fronts. First,
the impact of varied densities of clusters on density-based partitioning clustering algorithms has been well studied (e.g. \cite{ankerst1999optics,ertoz2003finding,campello2013density,zhu2016density,ting2019lowest,qin2019nearest}). But its impact on
the traditional AHC algorithms (T-AHC)
has not been investigated in the literature thus far.
We think its impact has been overlooked because the dendrogram is said to have a `complete' set of subclusters, assuming that some of these subclusters will be a good match for the true clusters. Yet, we show that this `complete' set of subclusters often does not include ones which are a good match to the ground truth clusters in a given dataset.

Our investigation uncovers a specific bias of T-AHC: using the distance-based linkage function, T-AHC tends to merge high-density subclusters first, before low-density subclusters in the merging process.  While there is a hint of this bias in the literature \cite{klemela2009smoothing}, no analysis of its cause has been conducted, as far as we know.  

Another possible reason for its lack of attention in this matter is that this bias only becomes an issue if clusters are not well separated (see our definition in Section \ref{why}.) In many real-world datasets in which clusters are not well separated, we have observed that this bias often leads to a dendrogram of poor quality.



The impact of the bias on T-AHC is most revealing in a dataset, having clusters of varied densities and not well separated, as shown  in Figure \ref{fig:hard2}.  Clusters extracted from this dendrogram either lost many of their (true) members or they have (false) members of other clusters or both.

\begin{figure} [!htb]
     \centering
     \begin{subfigure}[b]{0.43\textwidth}
         \centering
         \includegraphics[width=\textwidth]{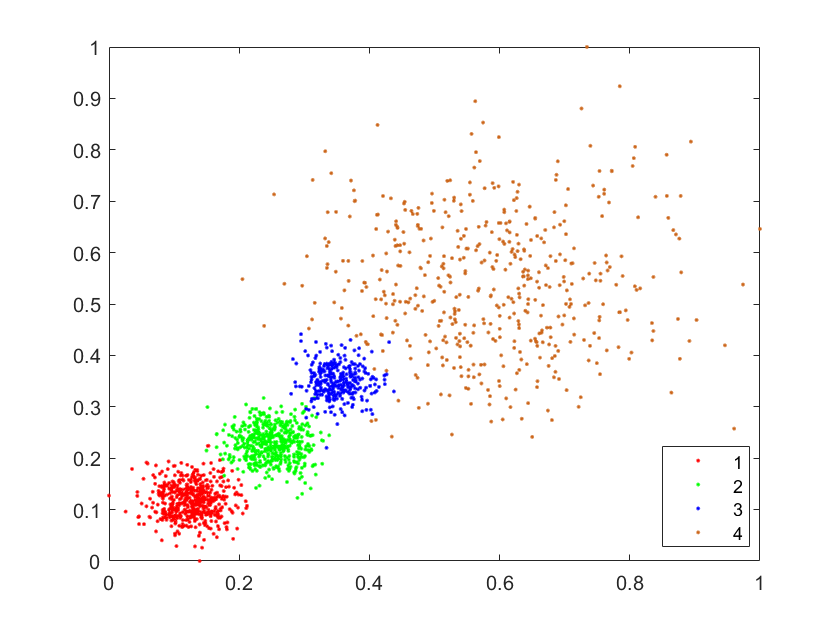}
           \caption{Dataset: 4 clusters of different densities}
         \label{fig:hard21}
     \end{subfigure}
     \begin{subfigure}[b]{0.43\textwidth}
         \centering
       \includegraphics[width=\textwidth]{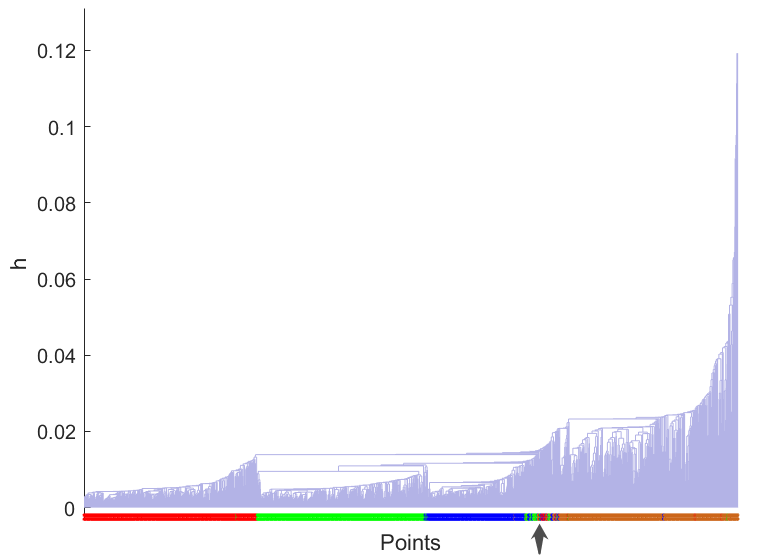}
         \caption{Dendrogram produced by T-AHC} 
       \label{fig:hard122}
     \end{subfigure}
    \caption{A dendrogram produced by T-AHC with the distance-based single-linkage function on a dataset with four clusters of varied densities.  The colours at the bottom of the dendrogram correspond to the true cluster labels of all points shown in Figure (a). The arrow in Figure (b) indicates the subtrees containing points from different clusters.}
    \label{fig:hard2}
\end{figure}

Second, in the context of density-based partitioning clustering, the root cause of the bias has been established, i.e., the use of data-independent kernel/distance \cite{ertoz2003finding,ting2019lowest,qin2019nearest}. In addition, in the context of kernel/spectral (partitioning) clustering,  the density bias has been recognised to be an issue \cite{huang2015density,8166757}. Stop short of declaring that the use of data-independent kernel is the root cause, a kernel which adapts to local density to replace a data-independent kernel has been proposed as a remedy,
e.g., Adaptive Gaussian Kernel \cite{zelnik2005self}. To the best of our knowledge, using the data-dependent kernel function to reduce the density bias in hierarchical clustering is still unexplored. 

Here, we contend that the same root cause yields the bias in T-AHC. We then provide the formal analysis and explanation as to why a well-defined data-dependent kernel called Isolation Kernel \cite{ting2018IsolationKernel,qin2019nearest} is an effective remedy.

The contributions of this paper are summarised as follows:

\begin{enumerate}
    \item Providing the formal condition under which an AHC, that employs an existing kernel/distance, does not extract clusters of a dataset effectively.
    \item Introducing a new concept called \textbf{entanglement} as a way to explain the merging process that leads to a poor quality dendrogram. Two indicators, i.e., the number of entanglements and the average entanglement level, are shown to be highly correlated to an objective measure of goodness of dendrogram called dendrogram purity. 
    \item Identifying the root cause of a density bias of T-AHC. The bias merges points in dense region first, before merging points in the sparse region; and its root cause is due to the use of data-independent distance/kernel. Though this analysis is mainly based on T-AHC, the same root cause also applies to existing AHC algorithms as well.  
    \item Proposing a generic approach to improve existing distance-based AHC algorithms: simply replace the distance function with a data-dependent kernel, without modifying the algorithms. We also provide the reason why a data-dependent kernel can significantly reduce the above-mentioned bias;

    \item Presenting the empirical evaluation results using four algorithms, i.e., T-AHC~\citep{jainDataClusteringReview1999}, HDBSCAN~\citep{campello2013density},
    GDL~\citep{zhang2012graph} and PHA~\citep{lu2013pha} that: 
    
    (i)  kernels produce better clustering results than distance, and
    
    (ii) a recently introduced Isolation Kernel \citep{ting2018IsolationKernel,qin2019nearest}
     performs better than Gaussian Kernel and Adaptive Gaussian Kernel \citep{zelnik2005self}.
\end{enumerate}

Our approach is distinguished from those used in existing AHC algorithms in two ways:
\begin{itemize}
    \item Rather than creating new linkage functions that still employ distance measure \citep{ackerman2016characterization,pmlr19}, we propose to use a data-dependent kernel to replace the distance function in existing linkage functions.
    \item The methodology is generic which can be applied to different hierarchical clustering algorithms.
    Many existing linkage functions are tailored made for a specific algorithm.
    We show that our approach can be applied to four existing methods, T-AHC, HDBSCAN \citep{campello2013density}, GDL~\citep{zhang2012graph} and PHA~\citep{lu2013pha},
    even though each algorithm has its own specific linkage function(s).
\end{itemize}

The rest of the paper is organised as follows.
We first describe the related work on AHC and kernels in Section 2. Then we define the  Kernel-based AHC and the condition under which it fails to identify clusters in Section 3. Section 4 investigates the effectiveness of the use of a data-dependent kernel in addressing the density bias in Kernel-based AHC. An extensive empirical evaluation is reported in Section 5, followed by the conclusions in the last section.

\section{Related Work}~\label{sec:relatedwork}

\subsection{Agglomerative Hierarchical Clustering}

Hierarchical clustering can be categorised into agglomerative (bottom-up) and divisive (top-down) methods~\citep{murtagh1983survey}, depending on the direction in which the hierarchy in a dendrogram is created.
Many research focuses on improving the hierarchical clustering
on algorithm-level~\citep{heller2005bayesian,krishnamurthy2012efficient}
and understanding hierarchical clustering~\citep{wu2009towards,Dasgupta2016CostFunction,cohen2019hierarchical}.
 
In this paper,
we focus on AHC.
An agglomerative method needs two functions:
a distance function measures the dissimilarity of two points;
and the linkage function $h(C_i,C_j)$ measures the dissimilarity of subclusters $C_i$ and $C_j$~\citep{pmlr19}. Treating individual points in a given dataset as subclusters, it iteratively merges two most similar subclusters based on $h(\cdot,\cdot)$
at each step  to form a tree-based structure (dendrogram) until a single cluster is formed at the root of the tree.

When using a hierarchical clustering algorithm, it is important to  choose a proper linkage function for the dendrogram construction because different linkage functions have different properties \citep{aggarwal2013data}. For example, the complete-linkage function is sensitive to noise and outliers; and the average linkage function finds mainly globular clusters only \citep{aggarwal2013data}. We found that four commonly used linkage functions in T-AHC have difficulty in handling clusters with varied densities (see Section \ref{why}).
 
To improve the performance of T-AHC, many variants of these common linkage functions have been introduced, attempting to address their limitations.
HDBSCAN~\citep{campello2013density} uses a density-based linkage function to
find clusters of varied densities.\footnote{HDBSCAN can be interpreted as a kind of AHC algorithm
which relies on a single linkage function and a particular dissimilarity measure, see the details in~\ref{appendA}.}
PHA~\citep{lu2013pha}, a potential-based hierarchical agglomerative clustering method based on a potential
theory~\citep{shuming2002potential}, uses a potential field produced by the distance of all the data points to measure the similarity between clusters.
The potential field is interpreted to represent the global data distribution information.
PHA is claimed to be robust to different types of data distribution in a dataset.

In order to capture the complex structure of a dataset,
several graph-structural agglomerative clustering algorithms are proposed.
Those algorithms first create a $k$-nearest-neighbour  graph to obtain a set of small initial clusters.
Then it iteratively merges two most similar clusters until the target number of clusters is obtained. 
Chameleon~\citep{karypis1999chameleon} measures the similarity
between two clusters based on relative interconnectivity and relative closeness, both of which are defined on the graph.
Zell~\citep{zhao2009cyclizing} describes the structure of a cluster and defines the similarity between two clusters based on the structural changes after merging.
GDL~\citep{zhang2012graph} uses the product of average indegree and average outdegree in graphs
to measure the similarity between two clusters. These algorithms typically use the pairwise distance to build the neighbourhood graph.
Although they could handle clusters with varied densities to some degree,
we show that simply replacing the distance with Isolation Kernel is able to significantly improve their clustering performance.   

All the above hierarchical clustering algorithms are based on a distance function to construct their linkage functions. As existing kernels like Gaussian kernel are data-independent, just as the distance function, the baseline of our investigation is AHC using kernel-based linkage functions, where the kernel is the commonly used Gaussian kernel. We will see later that both distance-based and this kernel-based linkage functions lead to the same bias we have mentioned in the introduction.

\subsection{Kernels}

Various kernel-based clustering algorithms have been developed to improve the performance of existing distance-based machine learning and data mining algorithms,
including kernel k-means~\citep{scholkopf1998nonlinear,shawe2004kernel},
density-based clustering~\citep{hinneburg2007denclue,qin2019nearest},
spectral clustering~\citep{dhillon2004kernel,zelnik2005self,kang2018unified},
NMF~\citep{xu2003document},
kernel SOM~\citep{macdonald2000kernel}
and kernel neural gas~\citep{qin2004kernel}.
However,
the study on the impact of a
kernel on hierarchical clustering remains unexplored in the literature.

In this paper, we focus on a commonly used data-independent kernel, i.e., Gaussian Kernel, and two data-dependent kernels, i.e., Adaptive Gaussian Kernel \citep{zelnik2005self} and Isolation Kernel \citep{ting2018IsolationKernel,qin2019nearest}; and examine their impacts on AHC. The former has been applied to spectral clustering, and the latter has been applied to SVM classifiers and DBSCAN \cite{ester1996density}.

A brief description of each of these kernels is provided in the following.
For any two points ${ x}, { y} \in \mathbb{R}^d$,
\begin{description}
\item[\textbf{Gaussian Kernel}:]
The Gaussian Kernel defines the similarity between $x$ and $y$ as follows:
\[
\mathcal{K}(x,y) = exp(\frac{-\parallel x - y \parallel^2}{2\sigma^2})
\]
\noindent where $\sigma$ is the bandwidth of the kernel.

Gaussian Kernel is a commonly used kernel, e.g., SVM for classification \citep{scholkopf2002learning} and  t-SNE \citep{maaten2008visualizing} for visualisation.

\item[\textbf{Adaptive Gaussian Kernel}:]
In order to make the similarity adaptive to local density, Adaptive Gaussian Kernel \citep{zelnik2005self} is defined as:

  \begin{eqnarray}
K_{AG}({ x},{ y})  & = &   exp(\frac{-|| x- y||^2}{\sigma_{ x} \sigma_{ y}})
 \label{Eqn_AG}
\end{eqnarray}
\noindent where $\sigma_{ x}$ is the Euclidean distance between $ x$ and $ x$'s $k$-th nearest neighbour.

Adaptive Gaussian Kernel was introduced in spectral clustering to
adjust the similarity locally to perform dimensionality reduction before clustering \citep{zelnik2005self}.

\item[\textbf{Isolation Kernel}:]
Isolation Kernel is a recently introduced data-dependent kernel, which adapts to local
distribution. The pertinent details of Isolation Kernel \citep{ting2018IsolationKernel,qin2019nearest} are provided below.

Let $\mathds{H}_\psi(D)$ denote the set of all partitions $H$
that are admissible under the dataset $D$, where each $H$ covers the entire space of $\mathbb{R}^d$; and each of the $\psi$ isolating partitions, $\theta[{z}] \in H$, isolates one data point ${z}$ from the rest of the points in a random subset $\mathcal D \subset D$, and $|\mathcal D|=\psi$. Here we use the Voronoi diagram \cite{aurenhammer1991voronoi} to partition the space, i.e., each $H \in \mathds{H}_\psi(D)$ is a Voronoi diagram and each sample point $z\in \mathcal D$ is a cell centre.

\begin{definition} Isolation Kernel of ${ x}$ and ${ y}$ wrt $D$ is defined to be
	the expectation taken over the probability distribution on all partitions $H \in \mathds{H}_\psi(D)$ that both ${ x}$ and ${ y}$  fall into the same isolating partition $\theta[{ z}] \in H, { z} \in \mathcal{D}$:
	\begin{eqnarray}
K_\psi({ x},{ y}\ |\ D) &=&  {\mathbb E}_{\mathds{H}_\psi(D)} [\mathds{1}({ x},{ y} \in \theta[{ z}]\ | \ \theta[{ z}] \in H)] \nonumber \\
&=& {\mathbb E}_{\mathcal{D} \subset D} [\mathds{1}({ x},{ y}\in \theta[{ z}]\ | \ { z}\in \mathcal{D})]  \nonumber
\\
&=& P({ x},{ y}\in \theta[{ z}]\ | \ { z}\in \mathcal{D} \subset D)
		\label{eqn_kernel}
	\end{eqnarray}
where $\mathds{1}(\cdot)$ is an indicator function.
\end{definition}

In practice, Isolation Kernel $K_\psi$ is constructed using a finite number of partitions $H_i, i=1,\dots,t$, where each $H_i$ is created using $\mathcal{D}_i \subset D$:
\begin{eqnarray}
K_\psi({ x},{ y}\ |\ D)  & = &  \frac{1}{t} \sum_{i=1}^t   \mathds{1}({ x},{ y} \in \theta\ | \ \theta \in H_i) \nonumber\\
 & = & \frac{1}{t} \sum_{i=1}^t \sum_{\theta \in H_i}   \mathds{1}({ x}\in \theta)\mathds{1}({ y}\in \theta)
 \label{Eqn_IK}
\end{eqnarray}
\noindent where $\theta$ is a shorthand for $\theta[{ z}]$, and $\psi$ is the sharpness parameter\footnote{This parameter corresponds to the $\sigma$ parameter in the Gaussian kernel, i.e., the smaller $\sigma$ is, the more concentrated its kernel distribution.}, i.e., the larger $\psi$ is, the sharper its kernel distribution.

As Equation (\ref{Eqn_IK}) is quadratic, $K_\psi$ is  a valid kernel.
The larger the $\psi$, the sharper the kernel distribution. $\psi$ is a parameter having a similar function to $\sigma$ in the Gaussian Kernel, i.e., the smaller $\sigma$ is, the narrower the kernel distribution. 
\end{description}

\subsection{Dendrogram evaluation}
An AHC algorithm produces a dendrogram or cluster tree. 
To evaluate the goodness of a dendrogram, \textit{Dendrogram Purity}  has been created to measure the hierarchical clustering result \citep{heller2005bayesian,kobren2017hierarchical,monath2019scalable}. 
Given a dendrogram, the procedure finds the smallest subtree containing two leave nodes belonging to the same ground-truth cluster; and measures the fraction of leave nodes in that subtree which belongs to the same cluster. The dendrogram purity is the expected value of this fraction. It is 1 if and only if all leave nodes belonging to the same cluster are rooted in the same subtree. 
The dendrogram purity \citep{heller2005bayesian} is calculated as follows.

Given a dendrogram $\mathcal{T}$ produced from a dataset $D=\{x_1,\dots,x_n\}$.
Let $C^{\star} = \left\{ C_{t}^{\star}\right\}_{t=1}^{\kappa}$
be the true labels in $\kappa$ clusters,
and $\mathcal{P}^{\star}=\left\{\left(x, x^{\prime}\right) | x, x^{\prime} \in \mathcal{X}, C^{\star}(x)=C^{\star}\left(x^{\prime}\right)\right\}$ be
the set of pairs of points that
are in the same ground-truth cluster.
Then the dendrogram
purity of $\mathcal{T}$ is
\begin{equation}~ 
\mathrm{Purity}(\mathcal{T})=\frac{1}{\left|\mathcal{P}^{\star}\right|} \sum_{t=1}^{\kappa} \sum_{x_{i}, x_{j} \in C_{t}^{\star}} \operatorname{pur}\left(\operatorname{lvs}\left(\operatorname{LCA}\left(x_{i}, x_{j}\right)\right), C_{t}^{\star}\right)
\label{eqn_dpurity}
\end{equation}
where $\operatorname{LCA}\left(x_{i}, x_{j}\right)$ is the least common ancestor of $x_i$ and $x_j$ in $\mathcal{T}$; $\operatorname{lvs}(z)\subset D$  is this the set of points in all the descendant leaves of $z$ in $\mathcal{T}$; and $\operatorname{pur}(S, C_{t}^{\star})=\frac{|S \cap C_{t}^{\star}|}{S}$  computes the fraction of $S$ that matches the ground-truth label of $C_{t}^{\star}$.

We use dendrogram purity as an objective measure to assess the quality of the dendrograms produced by different clustering algorithms.

\section{Kernel-based hierarchical clustering}

An agglomerative method creates a dendrogram and extracts clusters from it. To build a dendrogram, it iteratively merges two most similar subclusters (as measured by a linkage function based on a similarity measure) until all points in a dataset are grouped into one cluster.

A dendrogram contains a complete set of all possible subclusters grouped by the linkage function. It is richer than a flat clustering result produced by a partitioning clustering algorithm; and it shows the hierarchical relationship between subclusters. 

To extract the most meaningful clusters from a dendrogram, a cluster extraction algorithm applies cuts in the dendrogram to produce subtrees, where each subtree represents a cluster. 

Many kernel methods have been proposed to improve the performance of existing distance-based machine learning and data mining algorithms. For clustering, the use of kernel enables a method to capture the nonlinear relationship inherent in the data distribution and to separate non-convex clusters that would otherwise be impossible for distance-based methods. For example, kernel $k$-means \citep{scholkopf1998nonlinear} and spectral clustering \citep{zelnik2005self} have been shown to enrich the types of clusters that can be detected by distance-based k-means.

An existing distance-based agglomerative clustering algorithm can be easily kernelised by replacing the distance matrix with the kernel similarity matrix, leaving the rest of the procedure unchanged.  The procedure is shown in Algorithm~\ref{alg:khc} which employs $\hslash$ as a kernel linkage function.\footnote{When using a dissimilarity linkage function such as a Euclidean distance function, the Equation under the line 6 in Algorithm~\ref{alg:khc} should be $\argmin_{C_{\imath}\ne C_{\jmath} \in \mathbbm{C}} h(C_{\imath},C_{\jmath})$.} By setting $\kappa$ to 1, it produces a dendrogram. Table \ref{Kernel} shows the kernel versions of the four commonly used linkage functions in T-AHC. 

\begin{table*}[!htb]
  \renewcommand{\arraystretch}{1}
   \setlength{\tabcolsep}{0pt}
  \centering
  \caption{Kernel-based linkage functions $\hslash$ used in T-AHC. $C$ is a cluster which consists of data points; and $K$ is a kernel function.}
    \begin{tabular}{cc}
    \toprule
    Single-linkage    & $\hslash(C_i,C_j)= \max_{x\in C_i y\in C_j} K(x,y) $ \\
    Complete-linkage      &  $\hslash(C_i,C_j)= \min_{x\in C_i y\in C_j} K(x,y) $ \\
    Average-linkage  &   $\hslash(C_i,C_j)= \frac{1}{|C_i||C_j|} \sum_{x\in C_i y\in C_j} K(x,y) $  \\
    Weighted-linkage  &  \begin{minipage}{8cm}\begin{equation}
    \hslash(C_i,C_j) = \left\{ \begin{array}{l}
       \frac{ \hslash(C_p,C_j)+\hslash(C_q,C_j)}{2},  \text{ if } C_i=C_q \cup C_p\\
     K(C_i, C_j), \text{ if } |C_i|=|C_j|=1
    \end{array} \right. \nonumber
\end{equation} \end{minipage} \\
        \bottomrule
    \end{tabular}%
  \label{Kernel}%
\end{table*}%

\begin{algorithm}[!!htb]
\SetAlgoLined
\SetKwInOut{Input}{Input}\SetKwInOut{Output}{Output}
\Input{$M$ - pairwise similarity matrix ($n \times n$ matrix); $\kappa$ - target number of clusters}
\Output{$\mathbbm{C} = \{C_1,C_2,\dots,C_\kappa\}$}
\BlankLine
\For{ $j = 1,2,\dots,n$ \KwTo }
    { $C_j=\{x_j\}$;}
$\mathbbm{C}=\{C_1, C_2, ..., C_n\}$ \;
\While{$|\mathbbm{C}|>\kappa$ \KwTo}
     {Find the most similar pair $C_{p}$ and $C_{q}$ based on linkage function $\hslash$: $\argmax_{C_{\imath}\ne C_{\jmath} \in \mathbbm{C}} \hslash(C_{\imath},C_{\jmath})$\;
Merge $C_{p}$ and $C_{q}$ in $\mathbbm{C}$\;
    }
 \caption{ AHC - Agglomerative hierarchical clustering} 
 \label{alg:khc}
\end{algorithm}

\newpage
Here we provide the definitions associated with Algorithm \ref{alg:khc} and its resultant dendrogram; and a theorem stating the condition under which two ground-truth clusters can be successfully extracted from the dendrogram.

\begin{definition}  Given a set of points $D=\{x_1,\dots,x_n\}$, a set of subclusters are initialised as $\mathbbm{C}_1 = \{C_1,C_2,\dots,C_n\}$, where $C_i=\{x_i\}$. The
kernel agglomerative clustering algorithm recursively merges the two most similar subclusters $C_p$ and $C_q$ at each step $s$ and updates the set of subclusters to $\mathbbm{C}_s=\{\mathbbm{C}'_{s-1}  \cup C_{pq} \}$, where $\mathbbm{C}'_{s-1} = \mathbbm{C}_{s-1}  \setminus \{ C_p, C_q\}$ and $C_{pq}=C_p \cup C_q$, as measured by a kernel-based linkage function $\hslash$, i.e.,  $\{C_p,C_q\}=\argmax_{C_{i}\ne C_{j} \in \mathbbm{C}_{S-1}} \hslash(C_{i},C_{j})$, until all subclusters are merged into one final cluster.
\label{kerAHC}
\end{definition}

\begin{definition}
A dendrogram is a tree structure representing the order of the subcluster merging process. The height at the root of a subtree indicates the value of the kernel-based linkage function which is used to merge the two subclusters to form the subtree.
\end{definition}

\begin{definition}
A cluster extracted by a cut $\eta$ on the dendrogram is a subtree from which its next merged height is more than $\eta$.
\end{definition}

\begin{theorem}

Given two non-overlapping ground-truth clusters $\zeta_i$ and $\zeta_j$ in a dataset, to correctly identify them from the dendrogram produced by the agglomerative clustering algorithm with a kernel linkage function $\hslash$, both clusters must satisfy the following condition:

\begin{equation}
\forall_{\imath \leq I,\jmath \leq J}\ \min(\hat{\hslash}(\mathbbm{C}_\imath^i),\hat{\hslash}(\mathbbm{C}_\jmath^j)) \ge \hat{\hslash}(\mathbbm{C}_\imath^i,\mathbbm{C}_\jmath^j)
\label{eqn:condition}
\end{equation}
where $\mathbbm{C}_\imath^i$ is the set of subclusters at step $\imath$ of the process in merging members in $\zeta_i$; so as $\mathbbm{C}_\jmath^j$ in $\zeta_j$ (as defined in Definition 3);
$\hat{\hslash}(\mathbbm{C}_\imath^i)=max_{C_{p} \neq C_{q} \in \mathbbm{C}_\imath^i} \hslash(C_{p},C_{q})$ and $\hat{\hslash}(\mathbbm{C}_\imath^i,\mathbbm{C}_\imath^j)=max_{C_{p}\in \mathbbm{C}_\imath^i, C_{q} \in \mathbbm{C}_\imath^j} \hslash(C_{p},C_{q})$; and  $I$ and $J$ are the maximum numbers of steps required to merge all points in $\zeta_i$ and $\zeta_j$, respectively.
\label{Theorem1}
\end{theorem}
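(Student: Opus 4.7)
The plan is to prove the claim by contrapositive: I would show that if condition~(\ref{eqn:condition}) fails at some intermediate configuration $(\imath^\ast,\jmath^\ast)$, then the greedy agglomerative rule of Algorithm~\ref{alg:khc} eventually merges a subcluster of $\zeta_i$ with one of $\zeta_j$, producing a dendrogram node whose leaf set mixes the two ground-truth clusters, and hence no cut in the sense of the preceding cluster-extraction definition can recover both $\zeta_i$ and $\zeta_j$ as subtrees.

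First, I would make the extraction semantics explicit: a ground-truth cluster $\zeta$ is correctly identifiable by some cut if and only if the dendrogram contains a node whose leaf descendants equal $\zeta$. Any node with a mixed leaf set, containing points from both $\zeta_i$ and $\zeta_j$, violates this condition, because every ancestor of that node inherits the mixing while every proper descendant misses part of the mixed content. So proving that Algorithm~\ref{alg:khc} creates a mixed node at some step is enough to rule out correct identification.

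Next, I would track the greedy trajectory. As long as the algorithm has only performed intra-cluster merges, its current family $\mathbbm{C}_s$ decomposes uniquely into the internal partitions $\mathbbm{C}_{\imath(s)}^i$ of $\zeta_i$ and $\mathbbm{C}_{\jmath(s)}^j$ of $\zeta_j$, with $\imath(s)+\jmath(s)=s$. The $\argmax$ rule continues to pick an intra-cluster pair at step $s$ precisely when $\max(\hat{\hslash}(\mathbbm{C}_{\imath(s)}^i),\hat{\hslash}(\mathbbm{C}_{\jmath(s)}^j)) \ge \hat{\hslash}(\mathbbm{C}_{\imath(s)}^i,\mathbbm{C}_{\jmath(s)}^j)$, and switches to a cross-cluster pair as soon as this inequality fails. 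If condition~(\ref{eqn:condition}) holds at every reachable $(\imath,\jmath)$, then the min-based inequality immediately forces this weaker max-based inequality, so the greedy never leaves the intra-cluster regime and the dendrogram contains exact $\zeta_i$ and $\zeta_j$ subtrees. Conversely, if~(\ref{eqn:condition}) fails at some $(\imath^\ast,\jmath^\ast)$, at least one internal maximum is strictly below the cross maximum there, and a monotonicity argument on subsequent greedy merges drives the larger internal maximum below the cross maximum at some reached configuration, forcing a cross-cluster merge and creating the decisive mixed node.

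The main obstacle will be closing the gap between the strong min-based inequality in~(\ref{eqn:condition}) and the weaker max-based inequality that actually governs a single greedy step: the condition forbids the cross linkage from beating \emph{either} internal maximum, whereas the greedy rule only needs the larger of the two to dominate at each step. I would close this gap by monotonicity, arguing that for each of the four kernelised linkage functions in Table~\ref{Kernel}, repeatedly merging the current argmax pair never increases the cluster's internal maximum, so once one side of the min fails, intra-cluster merges on the other side eventually erode the remaining internal maximum until the max-based inequality also fails. Formalising this monotonicity uniformly across the linkage functions, and keeping the bookkeeping between the local indices $(\imath,\jmath)$ and the global step counter of Algorithm~\ref{alg:khc} clean, is the technically delicate part; the rest follows immediately from the mixed-node observation above.
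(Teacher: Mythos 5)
Your plan follows essentially the same route as the paper's own proof: argue by contraposition that a violation of Equation~\ref{eqn:condition} forces the greedy rule of Algorithm~\ref{alg:khc} to perform a cross-cluster merge before at least one of $\zeta_i$, $\zeta_j$ has been fully assembled, so the dendrogram contains a mixed node and no cut can recover both clusters. The difference is one of rigour rather than strategy: the paper passes directly from the violated inequality $\min(\hat{\hslash}(\mathbbm{C}^i_s),\hat{\hslash}(\mathbbm{C}^j_t)) < \hat{\hslash}(\mathbbm{C}^i_s,\mathbbm{C}^j_t)$ to the assertion that a cross pair $C_p,C_q$ dominates \emph{all} candidate pairs, which is exactly the min-versus-max gap you isolate; your proposed fix, a monotonicity lemma showing that intra-cluster merges never increase a cluster's internal maximum so the larger internal maximum is eventually eroded below the cross maximum at a reached configuration, is a genuine addition that the paper does not supply, and it goes through cleanly for single linkage where $\hat{\hslash}(\mathbbm{C}^i_\imath,\mathbbm{C}^j_\jmath)$ is invariant under merges. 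Be aware, though, that for the complete-, average- and weighted-linkage versions in Table~\ref{Kernel} the cross maximum itself also changes under merges, and the condition is quantified over all index pairs $(\imath,\jmath)$, including configurations the global run never visits simultaneously; closing the argument for those cases (or restricting the claim to reachable configurations) is the remaining work, and it is a weakness your sketch shares with, but does not inherit from, the published proof.
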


\begin{proof}
Given two clusters $\zeta_i$ and $\zeta_j$, an violation of Equation \ref{eqn:condition} means $\exists_{s \leq I, t \leq J}\ \min(\hat{\hslash}(\mathbbm{C}_\imath^s),\hat{\hslash}(\mathbbm{C}_\jmath^t)) < \hat{\hslash}(\mathbbm{C}_\imath^s,\mathbbm{C}_\jmath^t)$, i.e.,  $\exists_{C_p\in \mathbbm{C}_\imath^s,C_q\in \mathbbm{C}_\imath^t} \forall_{C_a\neq C_b \neq C_q \neq C_p \in \{ \mathbbm{C}_\imath^s  \cup \mathbbm{C}_\imath^t \}} \hslash(C_{p},C_{q})\ge \hslash(C_{a},C_{b})$. Using the a linkage function, subclusters $C_p$ and $C_q$ from $\zeta_i$ and $\zeta_j$ will be merged before each cluster merges its all own subclusters. Thus, the clusters which can be extracted from the dendrogram are subtrees containing points from two clusters or partial points from one cluster. 
\end{proof}

Equation \ref{eqn:condition} stipulates the condition that the linkage function shall enable each subtree to merge all members of the same cluster first, before merging with the subtree of the other cluster to form the final tree of the dendrogram. Otherwise, an entanglement has occurred.  

\begin{corollary}
An entanglement between two clusters $\zeta_i$ and $\zeta_j$ is said to have occurred in the dendrogram, produced by the agglomerative clustering algorithm with a kernel linkage function $\hslash$, when there is a violation of Equation \ref{eqn:condition} at $\imath=s,\jmath=t$ such that
\begin{equation}
\exists_{s \leq I, t \leq J}\ \min(\hat{\hslash}(\mathbbm{C}^i_s),\hat{\hslash}(\mathbbm{C}^j_t)) < \hat{\hslash}(\mathbbm{C}^i_s,\mathbbm{C}^j_t)
\label{eqn_entanglement}
\end{equation}
\end{corollary}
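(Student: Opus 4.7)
The plan is to derive this corollary directly from Theorem \ref{Theorem1} by identifying the phenomenon called ``entanglement'' with the specific failure mode that the theorem's proof already exhibits. Since Equation \ref{eqn_entanglement} is just the pointwise negation of Equation \ref{eqn:condition} at indices $(s,t)$, the corollary amounts to naming that failure mode rather than establishing a new implication; what must be argued is that the inequality in Equation \ref{eqn_entanglement} indeed corresponds to a subtree in the dendrogram whose leaves straddle the ground-truth clusters $\zeta_i$ and $\zeta_j$.

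The first step is to unpack the violation operationally. Assume Equation \ref{eqn_entanglement} holds at some $(s,t)$. Then there exist $C_p \in \mathbbm{C}^i_s$ and $C_q \in \mathbbm{C}^j_t$ with
\begin{equation}
\hslash(C_p, C_q) > \max\bigl(\hat{\hslash}(\mathbbm{C}^i_s),\ \hat{\hslash}(\mathbbm{C}^j_t)\bigr) \ \text{is false only when}\ \hslash(C_p,C_q) \ge \hslash(C_a,C_b) \nonumber
\end{equation}
for every intra-cluster pair $C_a \neq C_b$ still available in either $\mathbbm{C}^i_s$ or $\mathbbm{C}^j_t$. By the greedy merging rule (line 5 of Algorithm \ref{alg:khc}), this cross-cluster pair is chosen for merging in preference to any remaining intra-cluster pair within $\zeta_i$ or $\zeta_j$.

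The second step is to translate this merging event into the structural property claimed. Once $C_p$ and $C_q$ are merged, the resulting subtree contains leaves drawn from both $\zeta_i$ and $\zeta_j$, and this occurs strictly before each of the two ground-truth clusters has been assembled from its own members. This is precisely the situation described by the informal paragraph preceding the corollary and mirrors the argument already made in the proof of Theorem \ref{Theorem1}; it constitutes an entanglement in the dendrogram. Conversely, if no violation of Equation \ref{eqn:condition} ever occurs, Theorem \ref{Theorem1} guarantees both clusters are extractable without any such cross-cluster subtree, so no entanglement arises.

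The main obstacle, such as it is, lies not in any deep combinatorial argument but in being careful about quantifier scope: the ``$\min$'' on the left-hand side of Equation \ref{eqn_entanglement} must be tied to the existence of a concrete cross-cluster pair beating both internal maxima, rather than to a weaker statement about averages or sums over steps. As long as the proof pins down this single pair $(C_p, C_q)$ at the step where the inequality is violated, the remainder of the argument follows immediately from Definition \ref{kerAHC} and the preceding two definitions of dendrogram and cut. No additional machinery beyond Theorem \ref{Theorem1} is required.
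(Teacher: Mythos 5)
Your proposal matches the paper's treatment: the corollary carries no separate proof in the paper---it simply names the negation of Equation \ref{eqn:condition}---and your unpacking of the violation into a concrete cross-cluster pair $(C_p,C_q)$ that the greedy rule of Algorithm \ref{alg:khc} merges before either cluster is fully assembled is exactly the argument already given in the paper's proof of Theorem \ref{Theorem1}, with the converse likewise read off from that theorem. One small caution: the inequality $\min(\hat{\hslash}(\mathbbm{C}^i_s),\hat{\hslash}(\mathbbm{C}^j_t)) < \hat{\hslash}(\mathbbm{C}^i_s,\mathbbm{C}^j_t)$ only guarantees that the cross-cluster pair beats the smaller of the two internal maxima (so it pre-empts the remaining merges of at least one cluster), not \emph{both} internal maxima as your final paragraph suggests, though the paper's own proof glosses this quantifier point in essentially the same loose way.
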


If one or more entanglements have occurred, then the set of all possible subclusters in the dendrogram does not contain clusters $\zeta_i$ and $\zeta_j$, but their corrupted or partial versions.

Thus, if entanglements could not be avoided (e.g., in a dataset where clusters are very close to each other or even overlapping), an hierarchical clustering algorithm shall seek to achieve (i) an entanglement at $\imath,\jmath$ as high values as possible; and (ii) the least  number of entanglements,  in order to reduce the impact of incorrect membership assignment.

We use the following two indicators to assess the severity of the entanglement in a dendrogram.

The {\em number of entanglements} in a dendrogram can be counted as follows:
At every merge, the node is labelled with either a cluster label (if the two subclusters before merging have the same cluster label
) or neutral (if the two subclusters have different labels or one of them has neutral label). Every time a neutral label is used in a node, an entanglement has occurred; and the number of entanglements is incremented by 1.

The entanglement level is the sum of $s$ and $t$ (in Equation \ref{eqn_entanglement}) when an entanglement occurs.
The {\em average level of entanglements} is the ratio of the total level of all entanglements and the number of entanglements, where the level of entanglement at a merge is the number of steps used to reach that merge in the process.

\section{Why using data-dependent kernel?}
\label{why}

Here we discuss the density bias of T-AHC when a data-independent kernel is used and the reason why a data-dependent kernel can deal with clusters of varied densities better than data-independent kernel in the following two subsections.

\subsection{T-AHC has density bias when a data-independent kernel/distance is used}
 
Our investigation leads to the following observation:

\begin{observation}
T-AHC has a bias which links points in a dense region ahead of points in a sparse region in general. This bias is due to the use of a data-independent kernel/distance.
\end{observation}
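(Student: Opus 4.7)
The plan is to break the observation into two checkable sub-claims and establish each in turn: (i) under any data-independent similarity measure, the first merge of Algorithm~\ref{alg:khc} occurs inside the densest region of $D$, and (ii) this behaviour propagates so that a dense region is substantially agglomerated before any cross-region merge occurs. I would start by fixing a notion of local density, e.g.\ $\rho(x)=1/r_k(x)$ where $r_k(x)$ is the distance from $x$ to its $k$-th nearest neighbour in $D$, and calling $R_D\subset D$ \emph{denser than} $R_S\subset D$ whenever $\max_{x\in R_D} r_k(x) < \min_{y\in R_S} r_k(y)$. This captures the intuitive setting of Figure~\ref{fig:hard2} in which a tight cluster coexists with a loose one.

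For sub-claim (i), I would use the fact that any data-independent kernel $K(x,y)=g(\|x-y\|)$ with $g$ strictly decreasing (the Gaussian case), or any strictly increasing distance function (the Euclidean case), satisfies $\argmax_{x\neq y\in D} K(x,y) = \argmin_{x\neq y\in D}\|x-y\|$. Because the minimum pairwise distance inside $R_D$ is at most $\max_{x\in R_D} r_k(x)$ while inside $R_S$ it is at least $\min_{y\in R_S} r_k(y)$, the density assumption forces this argmax to lie in $R_D$. Hence the first subcluster produced by Algorithm~\ref{alg:khc} with a single-linkage kernel function lies inside the dense region. An almost identical monotonicity argument applies to complete-, average-, and weighted-linkage in Table~\ref{Kernel}, since each is a monotone aggregate of pairwise kernel values over singleton subclusters at step one.

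Sub-claim (ii) extends this by induction over the merging steps: as long as the dense-region subclusters remain small and geometrically compact, the intra-$R_D$ linkage values continue to dominate inter-region ones, so subsequent merges keep happening inside $R_D$. This phase constitutes exactly the early portion of the dendrogram that controls whether entanglements in the sense of Equation~\ref{eqn_entanglement} form. To close the loop and justify the ``due to the use of a data-independent kernel/distance'' part of the claim, I would contrast with Isolation Kernel: its partitions are sampled from $D$, so Voronoi cells in $R_S$ are larger than in $R_D$, and a moderately distant pair in $R_S$ co-occupies a cell with probability comparable to that of a tightly packed pair in $R_D$, neutralising the density-induced ordering.

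The main obstacle is that the observation is phrased qualitatively (``in general''), so the hardest part is pinning down a density and separation condition weak enough to be plausible on realistic data yet strong enough to force the ordering claim through the entire merging process rather than only the first step. In particular, for average and weighted linkage a sparse-region subcluster can, after a few internal merges, become numerically competitive with remaining dense-region singleton pairs; I would handle this by stating the rigorous result only for the early merging phase, which is precisely what governs entanglement formation, and treat the full ``general'' dominance as an empirical regularity supported by the experiments in the later sections.
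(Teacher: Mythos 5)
Your proposal is essentially sound, but it takes a genuinely different route from the paper: the paper never attempts a formal proof of this Observation at all. It supports the claim by (a) citing the known connection between single-linkage merge heights and the 1-nearest-neighbour density estimate (Klemel\"a), (b) an informal boundary-point argument for why the bias creates entanglements when clusters are not well separated, and (c) the empirical comparison in Table~\ref{fig:CompareDen}, where all four linkage functions of Table~\ref{Kernel} with a Gaussian kernel show low linkage values concentrated in the dense clusters and high values in the sparse one; the causal attribution to data-independence is then argued by contrast with Isolation Kernel in Section~\ref{why}. Your decomposition into a first-merge monotonicity lemma (any $K(x,y)=g(\|x-y\|)$ with $g$ decreasing makes the initial merge the closest pair, which the density condition places in the dense region, and at step one every linkage in Table~\ref{Kernel} reduces to a pairwise kernel value) plus an early-phase induction is more explicit than anything in the paper, and it buys a precise sufficient condition tying the bias directly to the functional form of the similarity; the paper's treatment buys breadth, covering real datasets and all linkage functions via entanglement counts and dendrogram purity rather than a restricted regime. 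Two caveats on your sketch: first, with density defined through the $k$-th NN radius $r_k$, your lower bound on the minimum pairwise distance inside the sparse region does not follow --- two sparse-region points may be mutual nearest neighbours at distance far below their $r_k$ --- so the separation condition should be stated with $r_1$ (or nearest-neighbour-within-region distances) for the argmax argument to go through; second, your decision to prove only the early merging phase rigorously and leave the full ``in general'' dominance as an empirical matter is exactly the level of rigour the paper itself settles for, so it is a reasonable, not a deficient, stopping point.
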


Single-linkage clustering builds the dendrogram where the heights is related to the 1-nearest neighbour density estimate, i.e., the points with higher heights (are merged later on the dendrogram) tend to be in low-density regions \cite{klemela2009smoothing}. 
This linking bias often lead to poor clustering outcomes when adjacent  clusters  have  different densities, i.e., the boundary points from a sparse cluster may link to its neighbouring dense cluster before linking back to the sparse cluster. Note that this bias has no issue if the clusters are clearly separated.

Although other linkage function could be less sensitive to the density distribution and may eliminate the issue in the cluster boundary regions, we have the following observation:  

\begin{definition}
Two clusters $C_a$ and $C_b$ are said to be well separated if any points in the valley $V$ between these clusters have the density less than any points in either cluster, i.e., $\rho'(z) < \rho(x)$, 

\noindent
where $\rho$ is the density of individual distribution of either $C_a$ or $C_b$; $\rho'$ is the density of the joint distribution of $C_a \cup C_b$; $z \in V$ which is part of the joint distribution of $C_a \cup C_b$; and $x \in C$ of individual cluster $C_a$ or $C_b$.
\end{definition}

\begin{observation}
With a data-independent kernel-based linkage function, the bias of T-AHC creates
entanglements if the clusters are 
not well separated.
\end{observation}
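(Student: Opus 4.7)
The plan is to prove the contrapositive-style statement by producing, from the failure of the well-separation condition, an explicit pair of points across $C_a$ and $C_b$ whose kernel similarity beats some within-cluster similarity of the sparser cluster, thereby forcing a violation of Equation~\ref{eqn:condition} somewhere in the merging process. By Corollary~1, that violation is exactly an entanglement. I would treat single-linkage first because the argument is cleanest there; the other three linkage functions in Table~\ref{Kernel} follow the same template with slightly more bookkeeping.

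First I would unpack the hypothesis. A data-independent kernel satisfies $K(x,y)=\varphi(\|x-y\|)$ for a monotone decreasing $\varphi$, so kernel similarity is interchangeable with proximity. Assuming $C_a$ and $C_b$ are not well separated, the negation of Definition~2 supplies a valley point $z^{\star}\in V$ and a cluster point $x^{\star}\in C_a\cup C_b$ (WLOG in the sparser cluster $C_b$) with $\rho'(z^{\star})\ge \rho(x^{\star})$. High joint density at $z^{\star}$ implies a concentration of sample points in a small ball around $z^{\star}$, and because $z^{\star}$ sits in the valley between the two clusters, that ball must draw mass from both sides, yielding a pair $u\in C_a$, $v\in C_b$ with $\|u-v\|$ small. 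Low density at $x^{\star}$ forces $x^{\star}$'s nearest same-cluster neighbour $y^{\star}\in C_b$ to lie at a comparatively larger Euclidean distance. Monotonicity of $\varphi$ then gives $K(u,v) > K(x^{\star},y^{\star})$.

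Second, I would substitute this pair into Algorithm~\ref{alg:khc}. Because $x^{\star}$ is a legitimate member of $C_b$, it must be absorbed into the bulk of $C_b$ at some step $t\le J$; and by the same token, the pair $(u,v)$ is available to be merged from the start. Pick $s$ to be the first step at which the subcluster containing $u$ in the $C_a$-side and the subcluster containing $v$ in the $C_b$-side are both still separate from the subcluster containing $x^{\star}$. At this $s$, $\hat{\hslash}(\mathbbm{C}_s^i,\mathbbm{C}_s^j)\ge K(u,v)$, whereas the internal quantity $\hat{\hslash}(\mathbbm{C}_s^j)$ is upper bounded by the best in-$C_b$ link not yet consumed, which by sparsity of $x^{\star}$'s neighbourhood is at most $K(x^{\star},y^{\star})$. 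Combining the two inequalities yields Equation~\ref{eqn_entanglement} at $(\imath,\jmath)=(s,t)$, so by Corollary~1 an entanglement occurs. This also matches the intuition in Observation~1: the dense valley pair is linked before the sparse cluster can finish knitting itself together.

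The main obstacle will be bridging the population quantity $\rho'$ (a property of the continuous distribution) with the finite-sample behaviour of the linkage function. A clean statement really requires either a mild regularity assumption on $\rho,\rho'$, a sufficiently large sample so that nearest-neighbour Euclidean distances concentrate around the inverse-density scale, or an asymptotic phrasing ``with high probability over iid samples.'' Without such a hedge, one can imagine pathological finite configurations where $x^{\star}$ happens to sit next to a close within-cluster neighbour even though its estimated local density is low. I would address this by invoking a standard kernel-density-estimation consistency result to tie $\rho$ to local nearest-neighbour distances, and then state the observation as holding with high probability in the large-sample regime. A secondary obstacle is generalising beyond single-linkage: for complete-, average-, and weighted-linkage the offending cross-cluster pair only contributes to, rather than fully determines, $\hat{\hslash}$, so the kernel inequality $K(u,v)>K(x^{\star},y^{\star})$ has to be promoted to an inequality on the relevant aggregate, typically by exhibiting a dense cluster of cross-cluster pairs near $z^{\star}$ rather than just a single pair.
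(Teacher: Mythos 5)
The paper never gives a formal proof of this statement: it is deliberately labelled an Observation, and its entire justification is a finite-sample illustration for single linkage, namely that if there exist $x\in C_d$ and $y\in C_s$ with $K(x,y)>K(y,z)$ for every other $z\in C_s$, then $y$ must engage in an entanglement. Your proposal is more ambitious --- you try to manufacture such a configuration from the density-based definition of well-separation --- but the two places where you go beyond the paper are exactly where the argument has genuine gaps. First, the bookkeeping inside the merging process is wrong as stated: you bound the internal quantity $\hat{\hslash}(\mathbbm{C}^j_s)$ by $K(x^{\star},y^{\star})$ at ``the first step at which the subclusters containing $u$, $v$ and $x^{\star}$ are separate'', but at that step (essentially the start, when everything is a singleton) the internal maximum ranges over \emph{all} pairs of subclusters of the sparse cluster and will typically be far larger than $K(x^{\star},y^{\star})$. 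The bound only holds at the particular step at which the singleton $\{x^{\star}\}$ is itself selected for its first internal merge (there the chosen linkage value is $\max_{z\in S}K(x^{\star},z)\le K(x^{\star},y^{\star})$), and the violation of Equation \ref{eqn:condition} should be exhibited at that step; your argument needs to be rerouted through it.

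Second, and more fundamentally, the inference from the negation of the well-separation definition to the existence of the pair $(u,v)$ with $K(u,v)>K(x^{\star},y^{\star})$ is not established. The negated condition only gives a single valley point $z^{\star}$ with $\rho'(z^{\star})\ge\rho(x^{\star})$; a high-density ball around $z^{\star}$ need not contain points of both clusters (it can be populated entirely from one side), so no small cross-cluster gap follows without extra geometric assumptions on the valley, and the population-to-sample concentration step you yourself flag remains unfilled. In other words, the claim you are trying to prove rigorously is stronger than what the paper asserts, and the missing density-to-configuration bridge is precisely why the paper states it as an Observation supported by the boundary-point example and the empirical dendrograms rather than as a theorem. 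A faithful reconstruction of the paper's reasoning would simply posit the finite-sample boundary condition ($y$'s most similar point lies in the dense cluster) and show, as the paper does, that single linkage then forces a cross-cluster merge before $y$ rejoins its own cluster; your extension beyond single linkage faces the same aggregation issue you already note.
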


For example, the single-linkage function relies on the nearest neighbour similarity/distance calculation. Given two adjacent clusters $C_d$ and $C_s$, if $\exists_{x\in C_d, y\in C_s} \forall_{z\neq y \in C_s} K(x,y)>K(y,z)$, then $y$ will engage in an entanglement using this linkage function. In other words, every entanglement involves a boundary point from $C_s$ which is more similar to a boundary point from $C_d$ than other points from $C_s$.

\begin{observation}
Different linkage functions produce different degrees of entanglements but the bias remains: T-AHC links points in a dense region ahead of points in a sparse region.
\end{observation}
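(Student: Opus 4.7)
The plan is to argue Observation 3 in two parts, matching its two claims: (i) that all four linkage functions in Table \ref{Kernel} share the same density bias when the underlying kernel is data-independent, and (ii) that they nevertheless differ in the specific entanglements they produce. I would start from the common structural fact that, for a data-independent kernel $K$, the similarity $K(x,y)$ is a monotone decreasing function of $\|x-y\|$ alone, so for any point $y$ its most similar neighbour is its spatial nearest neighbour, and the typical nearest-neighbour similarity in a region scales with the local density $\rho$. This is the single lever behind the bias.

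Next I would take each of the four linkage functions in Table \ref{Kernel} and show that its value between two subclusters inherits this density scaling. For a subcluster $C$ lying inside a dense region with density $\rho_d$, the quantity $\max_{x,y\in C} K(x,y)$, $\min_{x,y\in C} K(x,y)$, $\tfrac{1}{|C_i||C_j|}\sum K(x,y)$, and the recursively averaged weighted linkage all take values larger than the corresponding quantities for a subcluster of comparable diameter placed in a sparse region with density $\rho_s<\rho_d$. Consequently, applying the $\argmax$ rule on line 6 of Algorithm \ref{alg:khc}, any two subclusters drawn from the dense region will be preferred over two subclusters of equivalent size drawn from the sparse region. Repeatedly invoking this inequality along the merge sequence establishes the bias: points in the dense region are linked first. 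This is exactly the mechanism already identified for single-linkage in Observation 1, so I would frame the argument as a direct generalisation of the reasoning following Observation 1.

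For the second claim (different degrees of entanglements), I would then contrast how each linkage reacts at the boundary between a dense cluster $C_d$ and a sparse cluster $C_s$ that are not well separated in the sense of Definition 4. Using Corollary 1, an entanglement at $(s,t)$ occurs when a cross-cluster pair achieves a linkage value larger than any remaining within-cluster pair. Single-linkage, which takes the max, is most prone to this because a single close boundary pair suffices; complete-linkage, taking the min, requires every cross pair to beat every remaining within pair and therefore triggers fewer but typically later entanglements; average and weighted linkage sit between these extremes. I would make this quantitative only to the extent of pointing out the inequality $\min K \le \operatorname{avg} K \le \max K$, which implies that the set of $(s,t)$ satisfying Equation \eqref{eqn_entanglement} differs across linkages, yielding different counts and different average entanglement levels, but always non-empty whenever the underlying density contrast between $C_d$ and $C_s$ is sufficiently large and the clusters are not well separated.

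The main obstacle is the first part: since this is an observation rather than a worst-case theorem, I cannot rule out adversarial configurations in which, say, average-linkage accidentally avoids the bias on a particular dataset. I would therefore phrase the argument in terms of a generic scaling statement (expected linkage value as a function of local density for a random subcluster), and rely on the monotone dependence of $K$ on distance to conclude that the bias direction is uniform across linkages even though its magnitude is not. A brief empirical pointer to the dendrograms produced in the experiments of Section 5 would then serve to corroborate the claim that the qualitative bias survives all four choices in Table \ref{Kernel}.
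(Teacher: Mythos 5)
Your proposal is correct in spirit but takes a genuinely different route from the paper. The paper does not argue Observation~3 analytically at all: it supports the claim empirically, by pointing to the dendrograms in Table~\ref{fig:CompareDen}, where the four Gaussian-kernel linkage functions of Table~\ref{Kernel} all show low $\hslash$ values concentrated in the three dense clusters and high values in the sparse (brown) cluster, while the entanglement counts and average entanglement levels differ across linkages. What you do instead is promote the mechanism the paper gives only for single-linkage (the discussion around Observations~1 and~2) into a uniform argument for all four linkages: a data-independent kernel is monotone in $\|x-y\|$, so every linkage value inherits the density scaling and the $\argmax$ step of Algorithm~\ref{alg:khc} prefers dense-region merges; then the ordering $\min K \le \operatorname{avg} K \le \max K$ explains why the \emph{degree} of entanglement differs by linkage, consistent with Corollary~1. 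This buys an explanation the paper leaves implicit, at the price of needing the hedges you yourself note (it is a generic-scaling claim, not a worst-case theorem), whereas the paper's demonstration-by-example is weaker but immune to adversarial configurations.

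One step of your argument would fail as literally written: you compare a dense subcluster with a sparse subcluster ``of comparable diameter.'' Under that normalisation the complete-linkage value, $\min_{x\in C_i, y\in C_j} K(x,y)$, is governed essentially by the largest inter-point distance, so equal diameters give equal (not larger) linkage values for the dense subcluster, and the bias argument for complete linkage collapses. The correct normalisation is subclusters at the same stage of the merging process (comparable cardinality): with the same number of points, the dense subcluster is spatially tighter, so all of the max, min, average and recursively averaged quantities are larger, and the argument goes through for all four rows of Table~\ref{Kernel}. A second, smaller caveat: monotonicity of $K$ in distance holds for the Gaussian kernel used in the paper but is not a property of data-independent kernels in general, so you should state that assumption explicitly.
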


This can be seen from the example in Table 2, where four commonly used linkage functions shown in Table \ref{Kernel} are examined. When Gaussian kernel is used, independent of the linkage functions used, each dendrogram has low linkage $\hslash$ values in dense regions (three dense clusters) and high $\hslash$ values in sparse regions (one sparse cluster (brown)).

We contend that the root cause of this bias is the use of data-independent similarity/distance.\\
To reduce/eliminate this bias, we need a similarity which is data-dependent.

\begin{table} [!htbp]
\scriptsize
\centering
  \begin{tabular}{cccc}
    \hline
     &  Gaussian Kernel   & Isolation Kernel  \\
      \hline
  & Dendrogram purity=0.77 &  Dendrogram purity=0.97 \\
       \begin{turn}{90}  \qquad \quad Single-linkage  \end{turn}&       
       \includegraphics[width=1.8in]{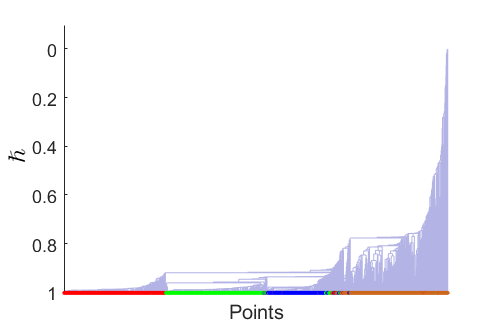}                     &
      \includegraphics[width=1.8in]{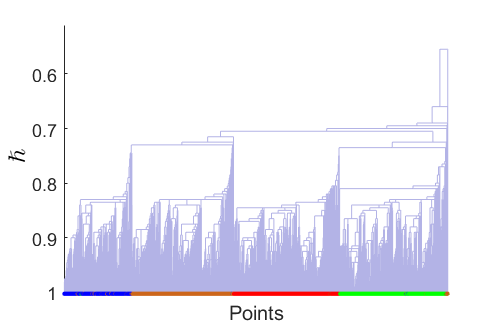}  \\
      & No.  entanglements= 242  &  No.  entanglements=170  \\
      & Avg.  entanglement level=1411 &Avg.  entanglement level=1477 \\
    \hdashline
     &Dendrogram purity=0.95  &  Dendrogram purity=0.98 \\
       \begin{turn}{90} \quad \quad  Average-linkage \end{turn}&       
          \includegraphics[width=1.8in]{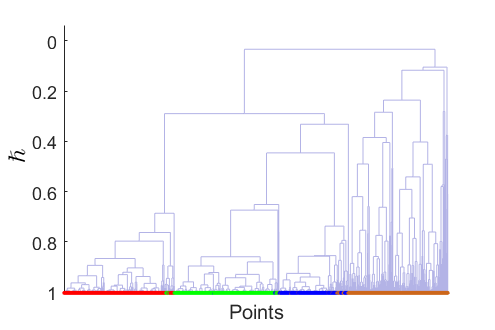}                           &
      \includegraphics[width=1.8in]{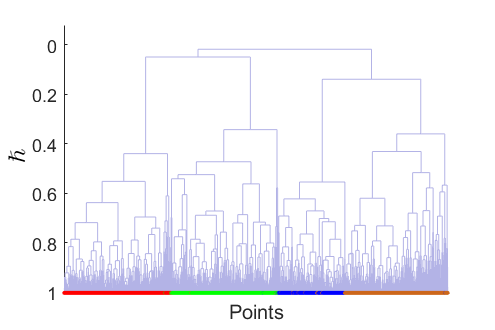}  \\    
            & No.  entanglements=103  &  No.  entanglements=100 \\
      & Avg.  entanglement level=1350 &Avg.  entanglement level=1445 \\
      \hdashline
      &Dendrogram purity=0.88   &  Dendrogram purity=0.97 \\
       \begin{turn}{90}\quad \quad  Complete-linkage  \end{turn}&
           \includegraphics[width=1.8in]{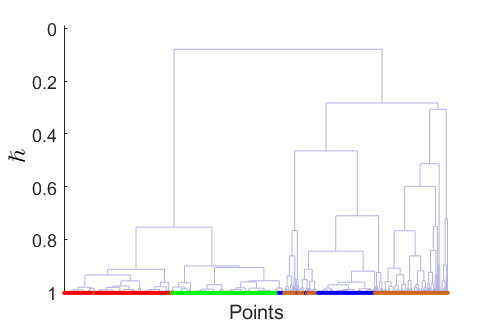}                               &
      \includegraphics[width=1.8in]{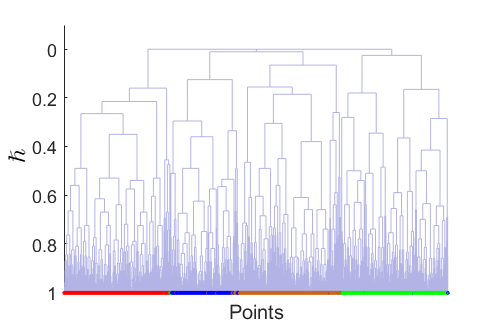}  \\   
                  & No.  entanglements=105   &  No.  entanglements=98 \\
            & Avg.  entanglement level=1394 &Avg.  entanglement level=1490 \\
            \hdashline
       &Dendrogram purity=0.91   &  Dendrogram purity=0.97 \\
       \begin{turn}{90}\qquad \qquad    Weighted \end{turn}&       
        \includegraphics[width=1.8in]{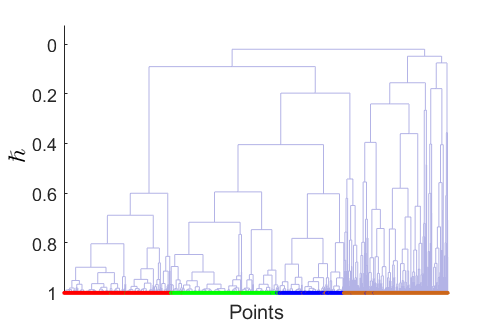}                  &
      \includegraphics[width=1.8in]{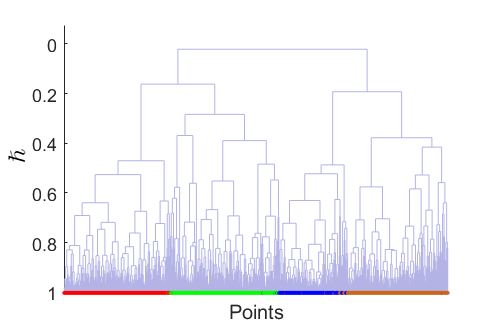} \\
                  & No.  entanglements=106  &  No.  entanglements=102 \\
            & Avg.  entanglement level=1360 &Avg.  entanglement level=1504 \\
            \\
      \hline
  \end{tabular}
  \caption{Comparison of dendrograms produced by T-AHC with four linkage functions using Gaussian Kernel and Isolation Kernel on the dataset shown in Figure \ref{fig:hard21}. The colours at the bottom row in each dendrogram correspond to the true cluster labels of all points shown in Figure \ref{fig:hard21}.}
  \label{fig:CompareDen}
\end{table}

\subsection{How data-dependent kernel helps}

Here we show that a data-dependent kernel called Isolation Kernel (IK) \citep{ting2018IsolationKernel,qin2019nearest}, which adapts its similarity measurement to the local density, significantly reduces the density bias posed by distance and data-independent kernel.

The unique characteristic of Isolation Kernel~\citep{ting2018IsolationKernel,qin2019nearest} is:
\textbf{two points in a sparse region are more similar than two points of equal inter-point distance in a dense region},
i.e,
$\forall x,y \in \mathcal{X}_{s}$, $\forall x^{\prime}, y^{\prime} \in \mathcal{X}_{d}$ if $\parallel x - y \parallel = \parallel x^{\prime} - y^{\prime}\parallel$ then
\begin{equation}
    K_{\psi}(x, y)>K_{\psi}\left(x^{\prime}, y^{\prime}\right)
\label{ike}
\end{equation}
where $\mathcal{X}_{s}$ and  $\mathcal{X}_{d}$ are two subsets of points in sparse and dense regions of $\mathbb{R}^d$, respectively; and $\parallel x - y \parallel$ is the distance between $x$ and $y$.\footnote{The proof of this characteristic is in the paper \cite{qin2019nearest}.}

Isolation Kernel deals more effectively with clusters with hugely different densities than data-independent kernels. This is because the isolation mechanism produces large partitions in a sparse region and small partitions in a dense region. Thus, the probability of two points from the dense cluster falling into the same isolating partition is higher than two points of equal inter-point distance from the sparse cluster. This gives rise to the data-dependent kernel characteristic mentioned above.

As a consequence of the above adaptation, the boundary points from a sparse cluster become less similar to the boundary points from a dense cluster. 
Thus, IK reduces the number of entanglements, and increases $s$ and $t$ in Equation \ref{eqn_entanglement} if an entanglement do occur in comparison with using a data-independent Kernel in T-AHC. 

Note that the condition in Equation \ref{eqn:condition} also applies when Isolation Kernel is used. However,  the influence of its data dependency is significant, as described below.

The isolation partitioning mechanism used to create IK is based on random samples from a given dataset. It yields two effects. First, the mechanism produces small partitions in dense clusters and large partitions in sparse clusters \citep{ting2018IsolationKernel,qin2019nearest}. The net effect is that every cluster has almost the same uniform distribution when transformed using MDS\footnote{Multidimensional scaling (MDS) is used for visualising the information contained in a similarity matrix \citep{borg2012applied}. It placed each data point in a $2$-dimensional space, while preserving as well as possible the pairwise similarities between points.} into a Euclidean space. This is shown in Figure \ref{fig:hard22}.
This effect was also observed in another data-dependent dissimilarity measure using a similar isolation mechanism, though it is not a kernel (see \cite{ting2019lowest} for details.)

Second, since data points in the valleys between clusters are less likely to include (than those within each cluster) in the sample used to create IK, each partition has a tiny or no chance to cover more than one cluster. As a result, the gaps/valleys between clusters become more pronounced in the transformed MDS space. Figure \ref{fig:hard22} shows a comparison between the MDS plots of GK and IK when they are used to transform the same dataset shown in Figure  \ref{fig:hard22}(a).

\begin{figure} [!htb]
     \centering
     \begin{subfigure}[b]{0.43\textwidth}
         \centering
         \includegraphics[width=\textwidth]{figure/Image/hard2.png}
           \caption{Dataset: 4 clusters of different densities}
         \label{fig:hard221}
     \end{subfigure}\\
     \begin{subfigure}[b]{0.43\textwidth}
         \centering
       \includegraphics[width=\textwidth]{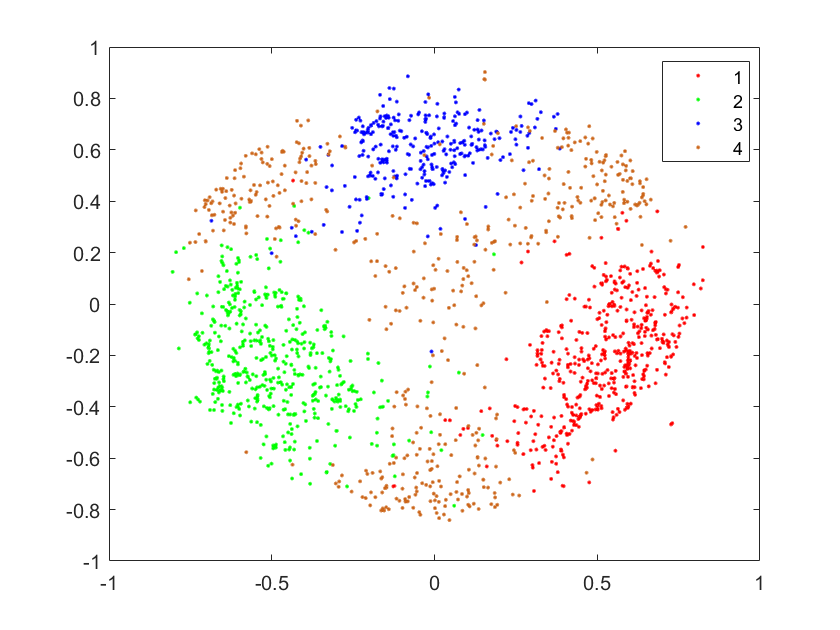}
         \caption{MDS plot using GK}
       \label{fig:hard222}
     \end{subfigure}
     \begin{subfigure}[b]{0.43\textwidth}
         \centering
       \includegraphics[width=\textwidth]{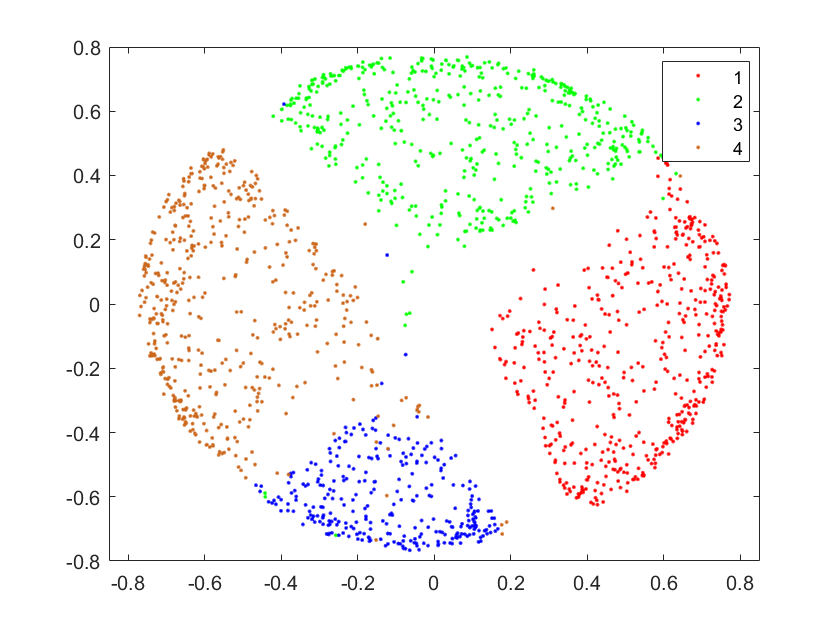}
         \caption{MDS plot using IK}
       \label{fig:hard223}
     \end{subfigure}
    \caption{MDS plot using Gaussian Kernel and Isolation Kernel on the dataset in (a).}
    \label{fig:hard22}
\end{figure}

These two net effects lead to the following observation:

\begin{observation}
T-AHC using a linkage function derived from Isolation Kernel has little or no bias towards points in dense regions, regardless of the relative density between clusters.
\end{observation}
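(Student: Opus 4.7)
The plan is to convert the density bias identified in Observations~1--3 into a quantitative ordering on within-cluster linkage values, and then show that the Isolation Kernel characteristic in Equation~\ref{ike} collapses that ordering. Concretely, Observation~1 says that, under a data-independent kernel, for a dense cluster $C_d$ and a sparse cluster $C_s$ with $\|x-y\|<\|x'-y'\|$ for typical pairs $x,y\in C_d$ and $x',y'\in C_s$, the monotonicity of $K$ in distance gives $K(x,y)>K(x',y')$. Every linkage function in Table~\ref{Kernel} is an extremum or an average of these pairwise values, so the ordering is inherited by $\hslash$ and the merges inside $C_d$ systematically precede those inside $C_s$, producing the entanglements described in Theorem~\ref{Theorem1}.

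For the Isolation Kernel, I would use Equation~\ref{ike} to reverse this inheritance. For equal inter-point distance, $K_\psi$ assigns a strictly larger similarity to the pair in the sparse region. Using the partition-counting form in Equation~\ref{Eqn_IK}, this is because the $\psi$-subsample that induces each Voronoi partition tends to leave large cells in sparse regions and fine cells in dense regions, so the probability that two points share a cell adapts to the local density. Combining this with the fact that typical intra-$C_s$ distances are larger than typical intra-$C_d$ distances, the two effects partially cancel: the distance-based decay is offset by the partition-size amplification. Using the MDS visualisation in Figure~\ref{fig:hard22}(c) as supporting evidence, the resulting typical values $K_\psi(x,y)$ for $x,y\in C_s$ are comparable to $K_\psi(x',y')$ for $x',y'\in C_d$, so neither of $\hat{\hslash}(\mathbbm{C}_\imath^d)$ and $\hat{\hslash}(\mathbbm{C}_\jmath^s)$ systematically dominates the other on grounds of density.

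Independence from the relative density then follows because the amount of cancellation is controlled by the ratio of cell sizes across the two regions, and that ratio tracks the density ratio itself: the sample that defines each partition is drawn from $D$, so denser regions automatically receive proportionally more cell centres. As the density contrast between $C_d$ and $C_s$ grows, the compensating partition-size contrast grows with it, and the within-cluster linkage values stay balanced. The main obstacle is that \emph{little or no bias} is not a sharp mathematical notion; a formal theorem would require a specific sampling model for each cluster and a concentration argument for the cell sizes as a function of $\psi$, together with a quantitative handle on how closely the two opposing effects cancel. I would therefore present the claim as an observation supported by the structural argument above and by the MDS evidence, rather than as a theorem with a closed-form inequality.
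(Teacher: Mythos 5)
Your proposal is correct and follows essentially the same route as the paper: it justifies the observation informally via the Isolation Kernel characteristic of Equation~\ref{ike} (large Voronoi cells in sparse regions, small cells in dense regions, with the adaptation scaling with the local density because cell centres are sampled from $D$), argues that this cancels the distance-based ordering so within-cluster linkage values become comparable across clusters, and appeals to the MDS plots and empirical dendrogram evidence rather than a formal theorem. The paper likewise presents this as an observation supported by the same uniformisation effect (its ``first effect'') and the experimental results in Table~\ref{fig:CompareDen}, so no gap needs to be addressed.
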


In other words, the distribution of merges in the dendrogram produced by T-AHC becomes more uniform over all clusters, as a result of the first effect. Yet, the entanglements (i.e., merges between two different clusters) becomes less, as a consequence of the second effect.
This can be seen from all four dendrograms showed in Table \ref{fig:CompareDen} using four different linkage functions. The $\hslash$ values of every Isolation kernel-based linkage function are more uniformly distributed than those of the corresponding Gaussian kernel-based linkage function. Yet, the number of entanglements due to IK is less, and the average entanglement level is higher.

In summary, with Isolation Kernel: (a) an entanglement is less likely to occur since T-AHC always seeks to merge two subclusters which are most similar at each step; and (b) if an entanglement occurs, it will happen at higher values of $s$ and $t$ (in Equation \ref{eqn_entanglement}) than those due to Gaussian kernel. As a result,  the impact of incorrect membership assignment will be smaller than that when a data-independent kernel linkage function is used. 

This is verified using dendrogram purity in Equation \ref{eqn_dpurity} \citep{heller2005bayesian} which is an objective measure assessing the goodness of a dendrogram produced by an AHC algorithm. As shown in Table \ref{fig:CompareDen}, Isolation Kernel always has higher dendrogram purity than the Gaussian kernel in every linkage function. This result is also reflected in terms of the number of entanglements (the lower the better) and the average entanglement level (the higher the better), as stipulated in relation to Corollary \ref{eqn_entanglement}.

\subsection{Demonstration on an image segmentation task}

Here we use an image segmentation task to demonstrate the density bias of T-AHC on a real-world image, as shown in Figure \ref{image}. The scatter plot of this image in the LAB space \citep{szeliski2010computer} shows that there is a clear gap between a dense cluster (representing the sky object) which is in close proximity to a sparse cluster (representing the building object) in the LAB space, although the sparse cluster has some dense areas. The dendrogram produced by IK is much better than that by GK, judging from the dendrogram purity results. 

When using the single-linkage AHC with Gaussian Kernel, the dendrogram produces a result that the building object is partially merged to the sky object if the best cut is applied, as shown in the first row of Table \ref{seg2}. This is the effect of varied cluster densities in the LAB space.
In contrast, using the Isolation Kernel, the building and sky are well separated, as shown in the second row in Table \ref{seg2}.

\begin{figure}[!htb]
\centering
  \begin{subfigure}[b]{0.4\textwidth}
 \centering\captionsetup{width=0.8\linewidth}%
    \includegraphics[width=\textwidth]{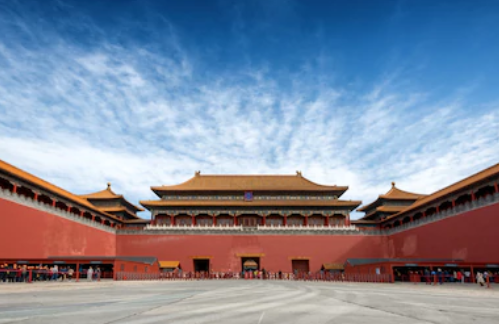}
    \caption{Image}
    \label{hard1:a}
  \end{subfigure}  %
  \begin{subfigure}[b]{0.4\textwidth}
 \centering\captionsetup{width=0.8\linewidth}%
    \includegraphics[width=\textwidth]{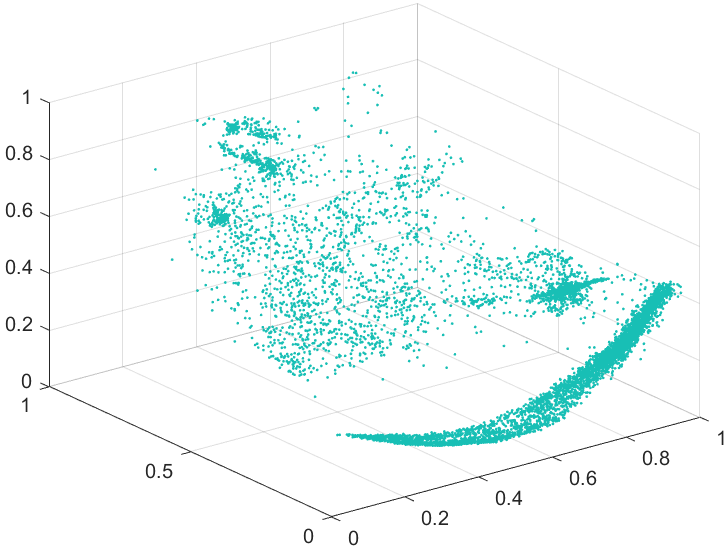}
    \caption{Pixels in LAB space}
        \label{hard1:d}
  \end{subfigure}
\caption{An example dataset of an image}
    \label{image} 
\end{figure}

\begin{table}[!htb]
 \setlength{\tabcolsep}{1.8pt}
 \scriptsize
\centering
\caption{Image segmentation results on the image using the LAB space shown in Figure \ref{image}, produced from the AHC algorithm with either Gaussian Kernel or Isolation Kernel. Each scatter plot in the `LAB Space' column illustrates the top two clusters identified (indicated by different colours) where the building (mainly green and sparse) is separated from the sky (mainly blue and dense). Columns `Cluster 1' and `Cluster 2' show the segmentation results on the image. The colours at the bottom row in each dendrogram correspond to the true cluster labels of all points, i.e, blue for sky and green for building. 
}
  \begin{tabular}{ccccc}
    \hline
     &  LAB Space & Dendrogram  & Cluster 1   & Cluster 2  \\
      \hline
       \begin{turn}{90} GK  single-linkage  \end{turn}&      \includegraphics[width=1.5in]{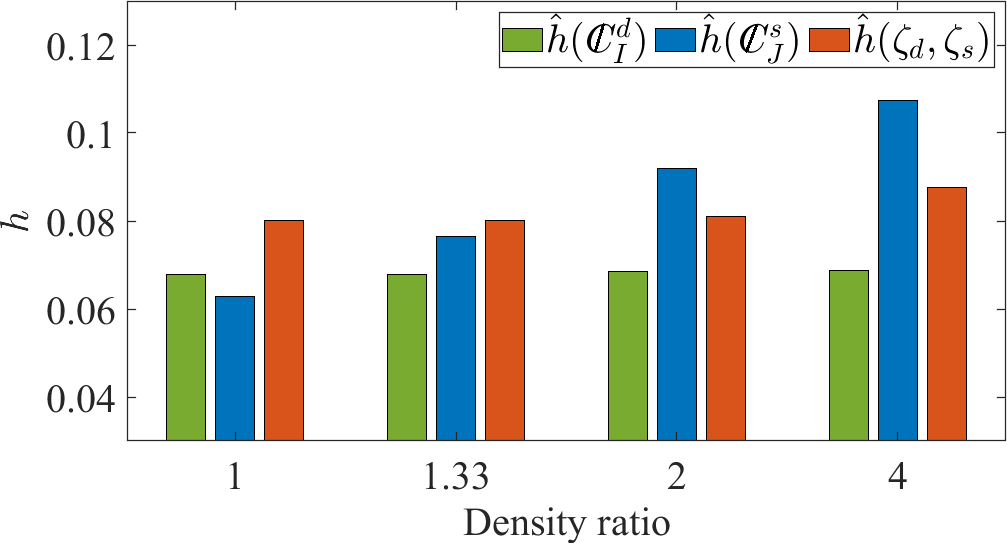} &
        \includegraphics[width=1.6in]{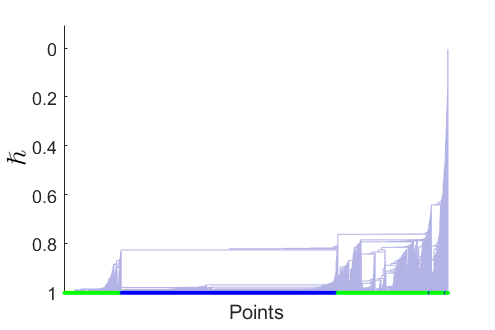} &
      \includegraphics[width=1.2in]{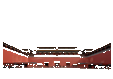} &
      \includegraphics[width=1.2in]{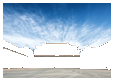}  \\ 
    &   Dendrogram purity=0.87 &  No. entanglements=210 & Avg. entanglement level=7640 \\
      \hdashline
       \begin{turn}{90} IK  single-linkage \end{turn}&      \includegraphics[width=1.5in]{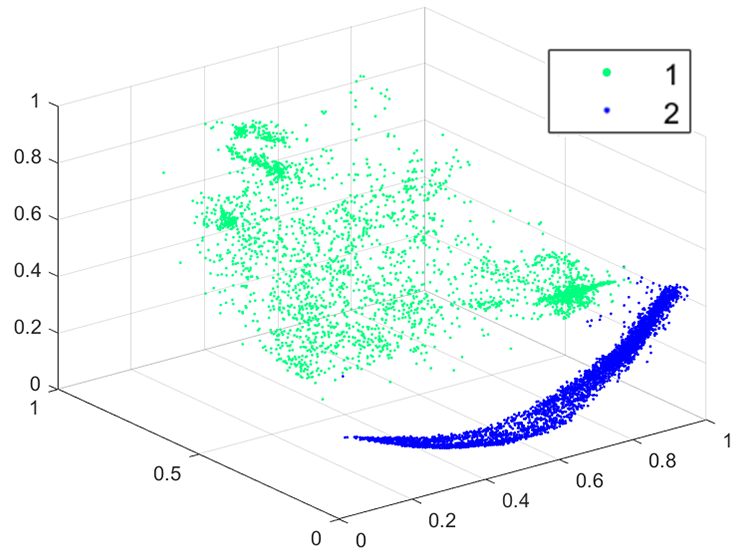} &
        \includegraphics[width=1.6in]{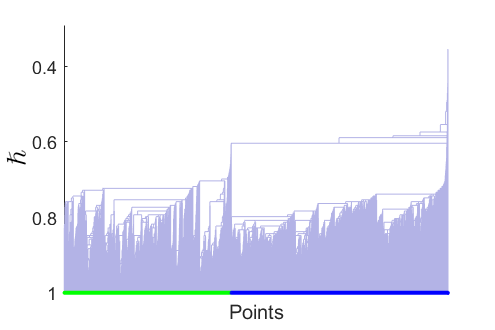} &
      \includegraphics[width=1.2in]{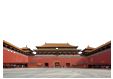} &
      \includegraphics[width=1.2in]{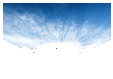}  \\
    &   Dendrogram purity=0.99 &  No. entanglements=16 & Avg. entanglement level=8091 \\ 
      \hline
  \end{tabular}
\label{seg2}
\end{table}

\subsection{Section summary}

We found that T-AHC using Gaussian Kernel has density bias, a known bias for T-AHC using distance. This bias heightens the severity of entanglements between clusters. As a result, T-AHC using Gaussian kernel will have difficulty separating the clusters successfully on a dataset with clusters of varied densities.
This phenomenon can be explained with the condition specified in Equation \ref{eqn:condition}.

We contend that the root cause of this bias is the use of a data-independent kernel. We show that using Isolation Kernel---because it adapts its similarity to local density---the resultant T-AHC has less density bias which reduces the severity and the number of entanglements. As a result, T-AHC using Isolation Kernel usually produces a better dendrogram than that using Gaussian Kernel.

We show in the next section that the same approach we suggested here, i.e., using Isolation Kernel instead of a data-dependent kernel/distance, can be applied to other hierarchical clustering algorithms, apart from T-AHC. We also show that not all data-dependent kernels are the same.

\section{Empirical evaluation}~\label{sec:experiments}

We provide the experimental settings and report the evaluation results in this section. In the experiment, we compared kernel-based hierarchical clustering with
traditional hierarchical clustering algorithms including
T-AHC with single-linkage and complete-linkage, the potential-based hierarchical agglomerative method PHA~\citep{lu2013pha}, the graph-based agglomerative method GDL~\citep{zhang2012graph} and the density-based method HDBSCAN~\citep{campello2013density}.
All algorithms used in our experiments are implemented in Matlab.


\subsection{Parameter settings}~\label{subsec:parameter}
Each parameter of an algorithm is searched within a certain range.
Table~\ref{para} shows the search ranges for all parameters; and we report the best performance on each dataset. The $k$ is the number of nearest neighbours for constructing the KNN graph and Adaptive Gaussian Kernel from Equation~\ref{Eqn_AG}. The $s$ in PHA is the scale factor for calculating the potential introduced \citep{lu2013pha}. $l$ and $c$ in HDBSCAN are minimum cluster sizes and minimum samples, respectively.\footnote{The guide  for parameter selection for HDBSCAN (so as its source code) is from https://hdbscan.readthedocs.io/ \citep{mcinnes2017accelerated}. All other codes used in empirical evaluations are published by the original authors. It is worth noting that not all parameters are valid for GDL in all datasets because the based $k$-nearest neighbours graph methods are sensitive for the parameter and similarity measure. Here we only record the available results. In addition, GDL source code does not output the dendrogram, thus we have to omit it in the dendrogram purity evaluation.}

\begin{table}[!htb]
  \centering
    \footnotesize
  \caption{Parameters and their search ranges for each algorithm. For AGK and IK, we searched all integer values within $[2,\lceil n/2 \rceil]$. T-AHC is an AHC using one of the four linkage functions in Table \ref{Kernel}.}
    \begin{tabular}{|c|c|}
    \hline
    Algorithm/Kernel & Parameter search range \\
    \hline
    T-AHC & $\kappa \in \{2,\dots,30\}$ \\
    HDBSCAN & $l \in [2,100]$, $c \in [2,100] $ \\
    PHA & $s \in \{5,10,15,20,25,30\}$ \\
    GDL & $k \in \{5,10,15,20,25,30,70,100\}$ \\
\hdashline
Gaussian Kernel &  $\sigma = 2^m$, \newline{} $m \in [-5,5]$  \\
Adaptive Gaussian Kernel &  $k  \in [2, \lceil n/2 \rceil]$ \\
Isolation Kernel &  $\psi  \in [2, \lceil n/2 \rceil]$, $t=200$ \\
    \hline
    \end{tabular}%
  \label{para}%
\end{table}%

The experiments use 19 real-world datasets with different data sizes and dimensions from UCI Machine Learning Repository \citep{Dua:2019}. The data properties are shown in the first four columns in Table~\ref{tab:dataset}.
All datasets were normalised using the Min-Max normalisation to yield each attribute to be in [0,1] before the experiments began. Some of these datasets have been shown to have clusters with varied densities, e.g., thyroid, seeds, wine, WDBC and Segment~\citep{zhu2016density}.

\begin{table}[!htb]
\renewcommand{\arraystretch}{0.8}
  \setlength{\tabcolsep}{2pt}
  \centering
  \caption{Properties of the datasets used in the experiments}
    \begin{tabular}{|c|ccc|}
    \hline
    Name & \#instance & \#Dim. & \#Clusters \\
    \hline
    banknote & 1372 & 4 & 2 \\
    thyroid & 215 & 5 & 3 \\
    seeds & 210 & 7 & 3 \\
    diabetes & 768 & 8 & 2 \\
    vowel & 990 & 10 & 11 \\
    wine & 178 & 13 & 3 \\
    shape & 160 & 17 & 9 \\
    Segment & 2310 & 19 & 7 \\
    WDBC & 569 & 30 & 2 \\
    spam & 4601 & 57 & 2 \\
    control & 600 & 60 & 6 \\
    hill & 1212 & 100 & 2 \\
    LandCover & 675 & 147 & 9 \\
    musk & 476 & 166 & 2 \\
    LSVT & 126 & 310 & 2 \\
    Isolet & 1560 & 617 & 26 \\
    COIL20 & 1440 & 1024 & 20 \\
    lung & 203 & 3312 & 5 \\
    ALLAML & 72 & 7129 & 2 \\
    \hline
    \end{tabular}%
  \label{tab:dataset}%
\end{table}%

\subsection{Hierarchical clustering evaluation}

We evaluate cluster trees using \textit{Dendrogram Purity} shown in Equation~\ref{eqn_dpurity}.
In words,
to compute \textit{Dendrogram Purity} of a dendrogram $\mathcal{T}$ with ground-truth clusters $C^{\star}$,
for all pairs of points $\left(x_i, x_j \right)$ which belong to the same ground-truth cluster,
find the smallest subdendrogram containing $x_i$ and $x_j$,
and compute the fraction of leaves in that subdendrogram which are in the same ground-truth cluster as $x_i$ and $x_j$.
For large-scale dataset,
we use Monte Carlo to approximated the \textit{Dendrogram Purity}.

In Table~\ref{tab:purity},
we report the dendrogram purity for the two T-AHC algorithms (single-linkage,
average-linkage),
PHA,
HDBSCAN
and their kernerlise with three kernel functions (G,AG and IK).
The best result on each dataset is
boldfaced.
We observe that kernel method consistently produces dendrogram with the highest \textit{dendrogram purity} amongst all algorithms.

\begin{landscape}
\begin{table}[!htb]
\small
 \renewcommand{\arraystretch}{1}
  \setlength{\tabcolsep}{1pt}
  \centering
  \caption{Clustering results in \textit{Dendrogram Purity}. The best result  on each dataset is boldfaced. $n$, $d$, $\kappa$ are \#Point, \#Dimension and \#Clusters, respectively.} 
    \begin{tabular}{|c|cccc|cccc|cccc|cccc|cccc|cccc|}
    \hline
    \multicolumn{1}{|c|}{\multirow{2}[4]{*}{Dataset}} & \multicolumn{4}{c|}{Single-linkage AHC} & \multicolumn{4}{c|}{Complete-linkage AHC} & \multicolumn{4}{c|}{Average-linkage AHC} & \multicolumn{4}{c|}{Weight-linkage AHC} & \multicolumn{4}{c|}{PHA} & \multicolumn{4}{c|}{HDBSCAN} \\
\cline{2-25}      & Dis & G & AG & IK & Dis & G & AG & IK & Dis & G & AG & IK & Dis & G & AG & IK & Dis & G & AG & IK & Dis & G & AG & IK \\
    \hline
    \multicolumn{1}{|c|}{banknote} & 0.92 & 0.92 & \textbf{0.99} & \textbf{0.99} & 0.63 & 0.63 & 0.80 & \textbf{0.82} & 0.68 & 0.96 & 0.78 & \textbf{0.98} & 0.64 & 0.78 & 0.73 & \textbf{0.94} & 0.62 & 0.71 & 0.68 & \textbf{0.76} & 0.92 & 0.92 & 0.96 & \textbf{0.99} \\
    \multicolumn{1}{|c|}{thyroid} & 0.92 & 0.92 & 0.93 & \textbf{0.93} & 0.89 & 0.89 & 0.95 & \textbf{0.97} & 0.93 & 0.92 & 0.94 & \textbf{0.96} & 0.86 & 0.92 & 0.95 & \textbf{0.97} & 0.91 & 0.92 & 0.92 & \textbf{0.93} & 0.92 & 0.93 & 0.92 & \textbf{0.96} \\
    \multicolumn{1}{|c|}{seeds} & 0.69 & 0.69 & 0.81 & \textbf{0.85} & 0.75 & 0.75 & 0.84 & \textbf{0.86} & 0.85 & 0.85 & 0.88 & \textbf{0.89} & 0.76 & 0.74 & 0.85 & \textbf{0.88} & 0.84 & 0.84 & \textbf{0.89} & 0.88 & 0.65 & 0.73 & 0.79 & \textbf{0.84} \\
    \multicolumn{1}{|c|}{diabetes} & 0.67 & 0.67 & 0.67 & \textbf{0.70} & 0.66 & 0.66 & 0.67 & \textbf{0.68} & 0.67 & 0.67 & 0.67 & \textbf{0.70} & 0.63 & 0.64 & 0.66 & \textbf{0.67} & 0.67 & 0.68 & 0.68 & \textbf{0.69} & 0.68 & 0.68 & 0.67 & \textbf{0.69} \\
    \multicolumn{1}{|c|}{vowel} & 0.20 & 0.20 & 0.24 & \textbf{0.26} & 0.22 & 0.22 & 0.25 & \textbf{0.27} & 0.22 & 0.22 & 0.24 & \textbf{0.28} & 0.23 & 0.25 & 0.27 & \textbf{0.30} & 0.20 & 0.22 & 0.23 & \textbf{0.24} & 0.19 & 0.19 & 0.23 & \textbf{0.24} \\
    \multicolumn{1}{|c|}{wine} & 0.68 & 0.68 & 0.84 & \textbf{0.90} & 0.92 & 0.92 & 0.96 & \textbf{0.98} & 0.89 & 0.92 & 0.94 & \textbf{0.96} & 0.81 & 0.82 & 0.93 & \textbf{0.94} & 0.73 & 0.74 & 0.91 & \textbf{0.94} & 0.63 & 0.79 & 0.75 & \textbf{0.89} \\
    \multicolumn{1}{|c|}{shape} & 0.69 & 0.69 & \textbf{0.70} & \textbf{0.70} & 0.65 & 0.65 & 0.68 & \textbf{0.69} & 0.65 & 0.65 & \textbf{0.72} & \textbf{0.72} & 0.68 & 0.68 & 0.73 & \textbf{0.74} & 0.67 & 0.67 & 0.70 & \textbf{0.74} & 0.68 & 0.70 & 0.70 & \textbf{0.75} \\
    \multicolumn{1}{|c|}{Segment} & 0.60 & 0.60 & 0.64 & \textbf{0.67} & 0.62 & 0.62 & 0.65 & \textbf{0.67} & 0.65 & 0.67 & 0.69 & \textbf{0.72} & 0.59 & 0.61 & 0.66 & \textbf{0.71} & 0.65 & 0.70 & 0.72 & \textbf{0.74} & 0.59 & 0.59 & 0.64 & \textbf{0.67} \\
    \multicolumn{1}{|c|}{WDBC} & 0.71 & 0.72 & 0.74 & \textbf{0.90} & 0.79 & 0.79 & 0.89 & \textbf{0.91} & 0.86 & 0.89 & 0.89 & \textbf{0.93} & 0.83 & 0.86 & 0.90 & \textbf{0.94} & 0.73 & 0.79 & 0.87 & \textbf{0.95} & 0.73 & 0.73 & 0.71 & \textbf{0.90} \\
    \multicolumn{1}{|c|}{spam} & 0.59 & 0.56 & 0.57 & \textbf{0.64} & 0.57 & 0.56 & 0.60 & \textbf{0.69} & 0.58 & 0.56 & 0.64 & \textbf{0.69} & 0.58 & 0.59 & 0.66 & \textbf{0.69} & 0.55 & 0.57 & 0.65 & \textbf{0.68} & 0.55 & 0.56 & 0.57 & \textbf{0.94} \\
    \multicolumn{1}{|c|}{control} & 0.73 & 0.73 & 0.80 & \textbf{0.87} & 0.81 & 0.81 & 0.82 & \textbf{0.85} & 0.81 & 0.81 & 0.91 & \textbf{0.94} & 0.77 & 0.75 & 0.89 & \textbf{0.90} & 0.66 & 0.67 & 0.79 & \textbf{0.82} & 0.71 & 0.71 & 0.75 & \textbf{0.83} \\
    \multicolumn{1}{|c|}{hill} & 0.50 & 0.50 & \textbf{0.51} & \textbf{0.51} & 0.50 & 0.50 & \textbf{0.51} & \textbf{0.51} & 0.50 & 0.50 & \textbf{0.51} & \textbf{0.51} & 0.50 & 0.50 & \textbf{0.51} & \textbf{0.51} & 0.50 & 0.50 & \textbf{0.51} & \textbf{0.51} & 0.50 & 0.50 & \textbf{0.51} & \textbf{0.51} \\
    \multicolumn{1}{|c|}{LandCover} & 0.30 & 0.30 & 0.39 & \textbf{0.55} & 0.52 & 0.52 & 0.58 & \textbf{0.60} & 0.56 & 0.59 & 0.63 & \textbf{0.64} & 0.48 & 0.56 & 0.59 & \textbf{0.60} & 0.44 & 0.48 & 0.53 & \textbf{0.61} & 0.27 & 0.27 & 0.35 & \textbf{0.48} \\
    \multicolumn{1}{|c|}{musk} & 0.54 & \textbf{0.64} & 0.54 & 0.55 & 0.56 & \textbf{0.65} & 0.56 & 0.57 & 0.55 & \textbf{0.65} & 0.56 & 0.57 & 0.55 & \textbf{0.65} & 0.56 & 0.57 & 0.54 & \textbf{0.65} & 0.56 & 0.56 & 0.54 & \textbf{0.94} & 0.56 & 0.93 \\
    \multicolumn{1}{|c|}{LSVT} & 0.58 & 0.60 & 0.63 & \textbf{0.67} & 0.62 & 0.62 & 0.65 & \textbf{0.71} & 0.63 & 0.63 & 0.65 & \textbf{0.67} & 0.61 & 0.64 & 0.66 & \textbf{0.69} & 0.59 & 0.60 & 0.67 & \textbf{0.71} & 0.58 & 0.61 & 0.62 & \textbf{0.64} \\
    \multicolumn{1}{|c|}{Isolet} & 0.27 & 0.27 & 0.36 & \textbf{0.58} & 0.48 & 0.48 & 0.56 & \textbf{0.57} & 0.57 & 0.60 & 0.62 & \textbf{0.68} & 0.54 & 0.59 & 0.59 & \textbf{0.66} & 0.39 & 0.48 & 0.50 & \textbf{0.64} & 0.21 & 0.57 & 0.29 & \textbf{0.61} \\
    \multicolumn{1}{|c|}{COIL20} & 0.91 & 0.91 & 0.99 & \textbf{1.00} & 0.66 & 0.67 & 0.66 & \textbf{0.70} & 0.72 & \textbf{0.93} & 0.79 & \textbf{0.93} & 0.72 & 0.97 & 0.82 & \textbf{0.98} & 0.70 & 0.76 & 0.76 & \textbf{0.79} & 0.89 & 0.90 & 0.99 & \textbf{1.00} \\
    \multicolumn{1}{|c|}{lung} & 0.76 & 0.91 & 0.82 & \textbf{0.95} & 0.89 & 0.92 & 0.93 & \textbf{0.96} & 0.94 & 0.94 & \textbf{0.96} & \textbf{0.96} & 0.94 & 0.95 & 0.95 & \textbf{0.96} & 0.77 & 0.95 & 0.86 & \textbf{0.98} & 0.87 & 0.87 & 0.90 & \textbf{0.97} \\
    \multicolumn{1}{|c|}{ALLAML} & 0.68 & 0.68 & 0.68 & \textbf{0.72} & 0.67 & 0.68 & 0.73 & \textbf{0.74} & 0.68 & 0.69 & 0.72 & \textbf{0.74} & 0.67 & 0.70 & 0.72 & \textbf{0.79} & 0.68 & 0.72 & 0.72 & \textbf{0.77} & 0.66 & \textbf{0.74} & 0.70 & \textbf{0.74} \\
    \hline
    Average & 0.63 & 0.64 & 0.68 & 0.73 & 0.65 & 0.66 & 0.70 & 0.72 & 0.68 & 0.72 & 0.72 & 0.76 & 0.65 & 0.70 & 0.72 & 0.76 & 0.62 & 0.67 & 0.69 & 0.73 & 0.62 & 0.68 & 0.66 & 0.77 \\
    \hline
    \end{tabular}%
   \label{tab:purity}%
 \end{table}%
\end{landscape}

We conduct a Friedman test with the post-hoc Nemenyi test~\citep{demvsar2006statistical} to examine whether the difference in \textit{Dendrogram Purity} of any two measures is significant.
Those four measures (used in an algorithm) are ranked based on their \textit{Dendrogram Purity} on each dataset,
where the best one is rank $1$ and so on.
Then the critical difference (CD) is computed using the post-hoc Nemenyi test. Two measures are significantly different if the difference in their average ranks is larger than CD.
Figure \ref{fig:NemenyitestP} shows that IK is significantly better than all other measures for every algorithm.

\begin{figure}[t]
     \centering
     \begin{subfigure}[b]{0.3\textwidth}
         \centering
         \includegraphics[width=\textwidth]{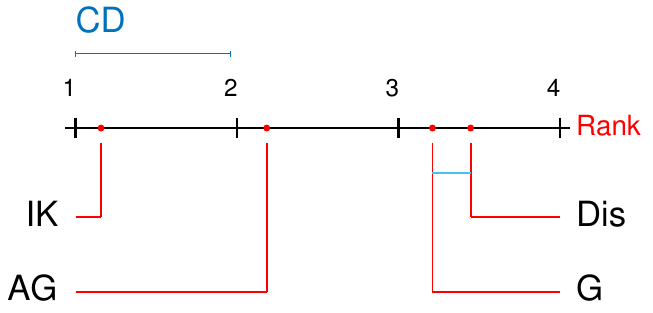}
         \caption{Single-linkage AHC}
         \label{NemenyitestP:single}
     \end{subfigure}
    \begin{subfigure}[b]{0.3\textwidth}
       \centering
       \includegraphics[width=\textwidth]{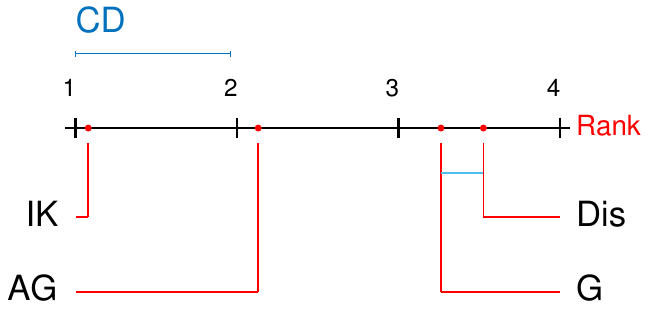}
      \caption{Complete-linkage AHC}
       \label{NemenyitestP:complete}
    \end{subfigure}
     \begin{subfigure}[b]{0.3\textwidth}
         \centering
         \includegraphics[width=\textwidth]{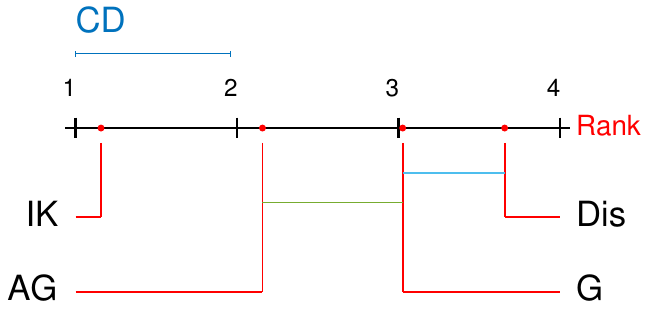}
         \caption{Average-linkage AHC}
         \label{NemenyitestP:average}
     \end{subfigure}
     \begin{subfigure}[b]{0.3\textwidth}
         \centering
         \includegraphics[width=\textwidth]{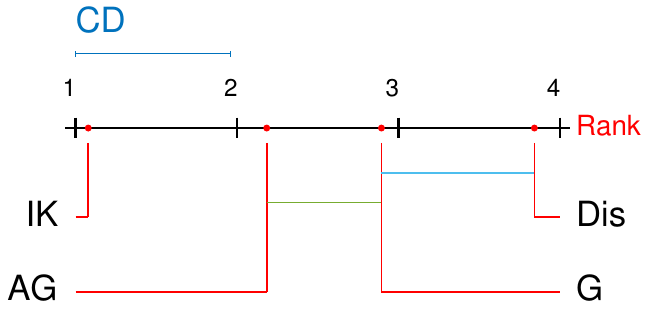}
         \caption{Weighted-linkage AHC}
         \label{NemenyitestP:weighted}
     \end{subfigure}
     \begin{subfigure}[b]{0.3\textwidth}
         \centering
         \includegraphics[width=\textwidth]{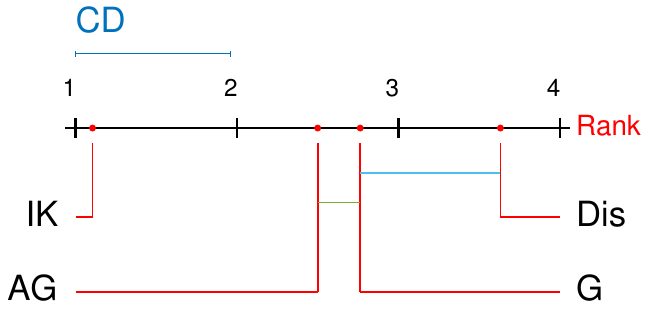}
         \caption{HDBSCAN}
         \label{NemenyitestP:hdbscan}
     \end{subfigure}
     \begin{subfigure}[b]{0.3\textwidth}
         \centering
         \includegraphics[width=\textwidth]{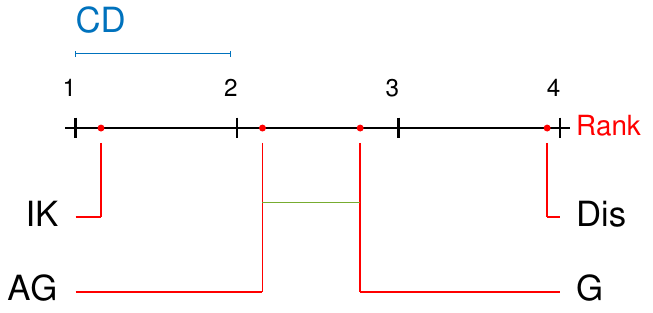}
         \caption{PHA}
         \label{NemenyitestP:pha}
     \end{subfigure}
        \caption{Critical difference (CD) diagram of the post-hoc Nemenyi test ($\alpha=0.10$) for dendrogram purity. Two measures are not significantly different if there is a line linking them.}
        \label{fig:NemenyitestP}
\end{figure}

\subsection{Flat Clustering Evaluation} 

To evaluate the flat clustering results, we use the original cluster extraction method in each algorithm, and compare the extracted clusters with ground truth cluster using $F1$ score which is a trade-off between the
\textit{Precision}
and
\textit{Recall}~\citep{Rijsbergen1979,larsen1999fast,aliguliyev2009performance}. Note that we use a global cut for T-AHC to extract the $k$ subclusters on the dendrogram.

Given a clustering result,
the \textit{precision} score $P_{i}$ and the \textit{recall} score $R_{i}$ for
each cluster $C_{i}$ are calculated based on the confusion matrix,
and the $F1$ score of $C_{i}$ is the harmonic mean of $P_{i}$ and $R_{i}$.
The Hungarian algorithm~\citep{kuhn1955hungarian} is used to search the
optimal match be between the clustering results and
ground-truth clusters.
The overall $F1$ score is the unweighted average
overall matched clusters as:
\begin{equation}\label{eq:f1}
  F1 = \frac{1}{k}\sum_{i=1}^{k}\frac{2 \times P_i \times R_i}{P_i+R_i}
\end{equation}

Note that other evaluation measures such as Purity \citep{manning2010introduction} and Adjusted Mutual Information \citep{vinh2010information} do not take into account noise points identified by a clustering algorithm.
They are not suitable for HDBSCAN in the evaluation section because these scores can provide a misleadingly good clustering result when HDBSCAN assigns many points to noise.

Table~\ref{tab:perf} reports the experimental results of traditional
AHC and other three algorithms.
The key observations are:
\begin{itemize}
    \item \textbf{Kernel improves the flat clustering performance of distance measure}.
        Every clustering algorithm which employs a kernel has better or equivalent clustering performance than that using distance in terms of average $F1$ score shown in the last row in Table~\ref{tab:perf}.
    The only exception is GDL, where Gaussian Kernel is marginally worse than distance. Even in GDL, IK and AGK are always better than distance (except on thyroid and lung only).
    \item \textbf{Among three kernels, Isolation Kernel (IK) produces the best $F1$ Score}. This occurs on all datasets, and on every algorithm. The only exception is with complete-linkage AHC on musk where IK is better than AGK but worse than GK).
    IK has the best $F1$ on 17-18 out of the 19 datasets for every clustering algorithm. The best contender is AGK which has the best $F1$ on 1-3 datasets on four algorithms, but it produced no best $F1$ on complete-linkage AHC.
    \item \textbf{IK produced a huge performance gap in $F1$ on some datasets}, even compared with other kernels. For example, (i) tyroid, WDBC, LSVT, Isolet and lung using single-linkage AHC; (ii) thyroid, spam, control, musk and ALLAML using HDBSCAN; (iii) LandCover and Isolet using PHA.
\end{itemize}

It is interesting to note that Isolation Kernel achieves the
largest performance improvement over
distance on almost all datasets for all algorithms.
In addition,
using IK in Complete-linkage AHC, PHA and GDL allow
all these algorithms to produce similar average $F1$ score.
In contrast,
using AGK only allows Complete-linkage and
GDL to produce similar average $F1$ score;
and using GK in these two algorithms makes
little difference or worse average $F1$ score
than those using distance.

\begin{landscape}
\begin{table}[!htb]
\small
 \renewcommand{\arraystretch}{1}
  \setlength{\tabcolsep}{1.2pt}
  \centering
  \caption{Clustering results in $F1$ score. A larger $F1$ score indicates a better clustering result. The best result  on each dataset is boldfaced.}  
    \begin{tabular}{|c|cccc|cccc|cccc|cccc|cccc|cccc|cccc|}
    \hline
    \multirow{2}[0]{*}{Dataset}  & \multicolumn{4}{c|}{Single-linkage AHC} & \multicolumn{4}{c|}{Complete-linkage AHC} & \multicolumn{4}{c|}{Average-l AHC} & \multicolumn{4}{c|}{Weighted-l AHC} & \multicolumn{4}{c|}{PHA} & \multicolumn{4}{c|}{HDBSCAN} & \multicolumn{4}{c|}{GDL} \\
    \cline{2-29}
      & Dis & G & AG & IK & Dis & G & AG & IK & Dis & G & AG & IK & Dis & G & AG & IK & Dis & G & AG & IK & Dis & G & AG & IK & Dis & G & AG & IK \\  \hline
    \multicolumn{1}{|c|}{banknote} & 0.95 & 0.95 & \textbf{0.99} & \textbf{0.99} & 0.60 & 0.60 & 0.79 & \textbf{0.81} & 0.61 & 0.97 & 0.77 & \textbf{0.99} & 0.62 & 0.86 & 0.73 & \textbf{0.94} & 0.68 & 0.74 & 0.78 & \textbf{0.96} & 0.69 & 0.69 & 0.94 & \textbf{0.95} & 0.98 & 0.91 & \textbf{0.99} & \textbf{0.99} \\
    \multicolumn{1}{|c|}{thyroid} & 0.58 & 0.58 & 0.76 & \textbf{0.91} & 0.80 & 0.80 & 0.95 & \textbf{0.96} & 0.73 & 0.73 & \textbf{0.95} & 0.91 & 0.75 & 0.75 & 0.93 & \textbf{0.95} & 0.55 & 0.59 & 0.85 & \textbf{0.86} & 0.57 & 0.59 & 0.58 & \textbf{0.78} & \textbf{0.91} & 0.73 & 0.71 & 0.84 \\
    \multicolumn{1}{|c|}{seeds} & 0.54 & 0.54 & 0.90 & \textbf{0.91} & 0.85 & 0.85 & 0.89 & \textbf{0.94} & 0.89 & 0.89 & 0.93 & \textbf{0.94} & 0.81 & 0.85 & 0.91 & \textbf{0.92} & 0.90 & 0.90 & 0.92 & \textbf{0.93} & 0.61 & 0.58 & 0.82 & \textbf{0.83} & 0.88 & 0.88 & 0.92 & \textbf{0.93} \\
    \multicolumn{1}{|c|}{diabetes} & 0.40 & 0.40 & 0.44 & \textbf{0.50} & 0.53 & 0.53 & 0.68 & \textbf{0.70} & 0.61 & 0.61 & 0.64 & \textbf{0.65} & 0.49 & 0.59 & 0.69 & \textbf{0.65} & 0.44 & 0.59 & 0.58 & \textbf{0.63} & 0.42 & 0.47 & 0.43 & \textbf{0.52} & 0.53 & 0.52 & 0.59 & \textbf{0.63} \\
    \multicolumn{1}{|c|}{vowel} & 0.08 & 0.12 & 0.27 & \textbf{0.31} & 0.29 & 0.29 & 0.33 & \textbf{0.34} & 0.28 & 0.31 & 0.33 & \textbf{0.37} & 0.27 & 0.32 & 0.35 & \textbf{0.37} & 0.12 & 0.25 & 0.33 & \textbf{0.35} & 0.24 & 0.27 & 0.32 & \textbf{0.33} & 0.31 & 0.30 & 0.31 & \textbf{0.36} \\
    \multicolumn{1}{|c|}{wine} & 0.56 & 0.57 & \textbf{0.92} & \textbf{0.92} & 0.94 & 0.94 & 0.95 & \textbf{0.98} & 0.93 & 0.96 & 0.96 & \textbf{0.97} & 0.87 & 0.92 & 0.96 & \textbf{0.98} & 0.58 & 0.59 & 0.93 & \textbf{0.95} & 0.51 & 0.55 & 0.82 & \textbf{0.91} & 0.92 & 0.92 & \textbf{0.97} & \textbf{0.97} \\
    \multicolumn{1}{|c|}{shape} & 0.44 & 0.52 & 0.75 & \textbf{0.77} & 0.64 & 0.66 & 0.74 & \textbf{0.78} & 0.63 & 0.68 & \textbf{0.76} & \textbf{0.76} & 0.67 & 0.68 & 0.74 & \textbf{0.77} & 0.64 & 0.65 & \textbf{0.76} & \textbf{0.76} & 0.68 & 0.68 & \textbf{0.71} & \textbf{0.71} & 0.70 & 0.66 & 0.71 & \textbf{0.74} \\
    \multicolumn{1}{|c|}{Segment} & 0.34 & 0.36 & 0.55 & \textbf{0.66} & 0.65 & 0.65 & 0.73 & \textbf{0.75} & 0.53 & 0.58 & 0.73 & \textbf{0.79} & 0.55 & 0.60 & 0.72 & \textbf{0.76} & 0.39 & 0.71 & 0.73 & \textbf{0.75} & 0.61 & 0.60 & \textbf{0.69} & 0.63 & 0.42 & 0.45 & 0.78 & \textbf{0.82} \\
    \multicolumn{1}{|c|}{WDBC} & 0.40 & 0.67 & 0.40 & \textbf{0.93} & 0.77 & 0.77 & 0.92 & \textbf{0.95} & 0.87 & 0.91 & 0.91 & \textbf{0.95} & 0.79 & 0.88 & 0.92 & \textbf{0.96} & 0.40 & 0.41 & 0.88 & \textbf{0.94} & 0.41 & 0.54 & 0.71 & \textbf{0.91} & 0.94 & 0.94 & 0.95 & \textbf{0.96} \\
    \multicolumn{1}{|c|}{spam} & 0.38 & 0.38 & 0.38 & \textbf{0.42} & 0.42 & 0.51 & 0.64 & \textbf{0.73} & 0.38 & 0.38 & 0.66 & \textbf{0.77} & 0.39 & 0.52 & 0.74 & \textbf{0.80} & 0.38 & 0.38 & 0.70 & \textbf{0.71} & 0.28 & 0.35 & 0.48 & \textbf{0.87} & 0.38 & 0.38 & 0.69 & \textbf{0.74} \\
    \multicolumn{1}{|c|}{control} & 0.23 & 0.23 & 0.60 & \textbf{0.79} & 0.75 & 0.75 & 0.84 & \textbf{0.86} & 0.71 & 0.73 & 0.86 & \textbf{0.87} & 0.72 & 0.74 & 0.82 & \textbf{0.84} & 0.46 & 0.63 & 0.73 & \textbf{0.81} & 0.32 & 0.57 & 0.58 & \textbf{0.76} & 0.76 & 0.76 & \textbf{0.95} & \textbf{0.95} \\
    \multicolumn{1}{|c|}{hill} & 0.34 & 0.37 & \textbf{0.51} & 0.46 & 0.37 & 0.40 & 0.50 & \textbf{0.51} & 0.37 & 0.41 & \textbf{0.51} & \textbf{0.51} & 0.39 & 0.40 & 0.51 & \textbf{0.52} & 0.37 & 0.39 & 0.54 & \textbf{0.56} & 0.33 & 0.36 & 0.47 & \textbf{0.48} & 0.46 & 0.35 & 0.50 & \textbf{0.62} \\
    \multicolumn{1}{|c|}{LandCover} & 0.16 & 0.08 & 0.18 & \textbf{0.36} & 0.59 & 0.60 & 0.64 & \textbf{0.65} & 0.36 & 0.55 & 0.72 & \textbf{0.74} & 0.58 & 0.67 & 0.67 & \textbf{0.72} & 0.08 & 0.41 & 0.49 & \textbf{0.70} & 0.18 & 0.28 & 0.39 & \textbf{0.55} & 0.62 & 0.61 & 0.74 & \textbf{0.75} \\
    \multicolumn{1}{|c|}{musk} & 0.36 & \textbf{0.64} & 0.50 & 0.53 & \textbf{0.58} & \textbf{0.58} & 0.55 & 0.57 & 0.36 & 0.53 & 0.53 & \textbf{0.56} & 0.48 & \textbf{0.54} & \textbf{0.54} & \textbf{0.54} & 0.51 & 0.51 & 0.55 & \textbf{0.57} & 0.50 & 0.52 & 0.50 & \textbf{0.75} & 0.48 & 0.48 & 0.50 & \textbf{0.56} \\
    \multicolumn{1}{|c|}{LSVT} & 0.40 & 0.49 & 0.44 & \textbf{0.67} & 0.40 & 0.55 & 0.58 & \textbf{0.65} & 0.40 & 0.60 & 0.58 & \textbf{0.62} & 0.40 & 0.61 & 0.60 & \textbf{0.62} & 0.40 & 0.42 & 0.67 & \textbf{0.68} & 0.15 & 0.42 & 0.42 & \textbf{0.59} & 0.51 & 0.51 & 0.60 & \textbf{0.63} \\
    \multicolumn{1}{|c|}{Isolet} & 0.03 & 0.03 & 0.07 & \textbf{0.28} & 0.33 & 0.39 & 0.59 & \textbf{0.68} & 0.12 & 0.30 & 0.60 & \textbf{0.68} & 0.24 & 0.40 & 0.59 & \textbf{0.66} & 0.03 & 0.24 & 0.31 & \textbf{0.66} & 0.09 & 0.35 & 0.38 & \textbf{0.55} & 0.61 & 0.61 & 0.67 & \textbf{0.73} \\
    \multicolumn{1}{|c|}{COIL20} & 0.28 & 0.33 & 0.96 & \textbf{0.97} & 0.39 & 0.46 & 0.71 & \textbf{0.73} & 0.24 & 0.56 & 0.77 & \textbf{0.90} & 0.27 & 0.71 & 0.80 & \textbf{0.92} & 0.44 & 0.49 & 0.73 & \textbf{0.76} & 0.84 & 0.84 & \textbf{0.95} & \textbf{0.95} & 0.86 & 0.86 & \textbf{0.87} & \textbf{0.87} \\
    \multicolumn{1}{|c|}{lung} & 0.43 & 0.43 & 0.52 & \textbf{0.88} & 0.81 & 0.81 & 0.85 & \textbf{0.86} & 0.87 & 0.87 & 0.87 & \textbf{0.90} & 0.87 & 0.91 & 0.88 & \textbf{0.94} & 0.42 & 0.61 & 0.78 & \textbf{0.96} & 0.23 & 0.58 & 0.68 & \textbf{0.76} & 0.91 & 0.91 & 0.90 & \textbf{0.94} \\
    \multicolumn{1}{|c|}{ALLAML} & 0.46 & 0.47 & 0.61 & \textbf{0.75} & 0.62 & 0.62 & 0.74 & \textbf{0.75} & 0.57 & 0.61 & 0.74 & \textbf{0.75} & 0.53 & 0.63 & 0.72 & \textbf{0.82} & 0.46 & 0.49 & 0.72 & \textbf{0.75} & 0.36 & 0.49 & 0.53 & \textbf{0.73} & 0.60 & 0.60 & 0.72 & \textbf{0.78} \\
    \hline
    Average & 0.39 & 0.43 & 0.57 & 0.69 & 0.60 & 0.62 & 0.72 & 0.75 & 0.55 & 0.64 & 0.73 & 0.77 & 0.56 & 0.66 & 0.73 & 0.77 & 0.43 & 0.53 & 0.68 & 0.75 & 0.42 & 0.51 & 0.60 & 0.71 & 0.67 & 0.65 & 0.74 & 0.78 \\
    \hline
    \end{tabular}%
  \label{tab:perf}%
\end{table}%

\end{landscape}

\begin{figure}[t]
     \centering
     \begin{subfigure}[b]{0.3\textwidth}
         \centering
         \includegraphics[width=\textwidth]{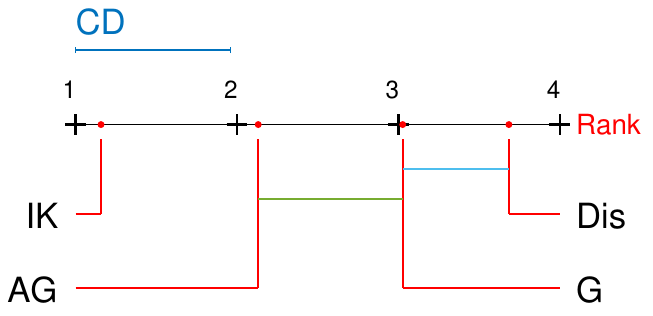}
         \caption{Single-linkage AHC}
         \label{fig:single}
     \end{subfigure}
    \begin{subfigure}[b]{0.3\textwidth}
       \centering
       \includegraphics[width=\textwidth]{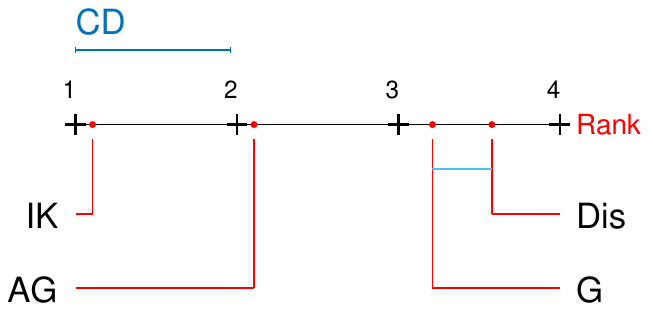}
      \caption{Complete-linkage AHC}
       \label{fig:complete}
    \end{subfigure}
     \begin{subfigure}[b]{0.3\textwidth}
         \centering
         \includegraphics[width=\textwidth]{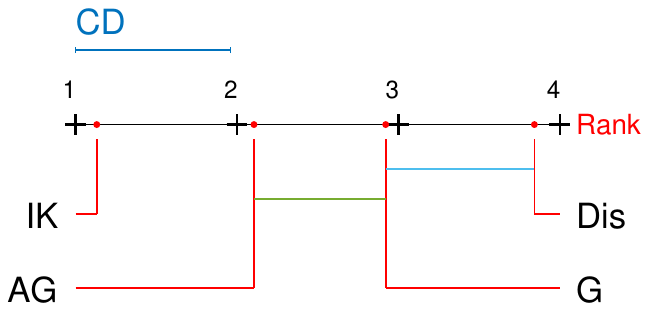}
         \caption{Average-linkage AHC}
         \label{fig:average}
     \end{subfigure}
     \begin{subfigure}[b]{0.3\textwidth}
         \centering
         \includegraphics[width=\textwidth]{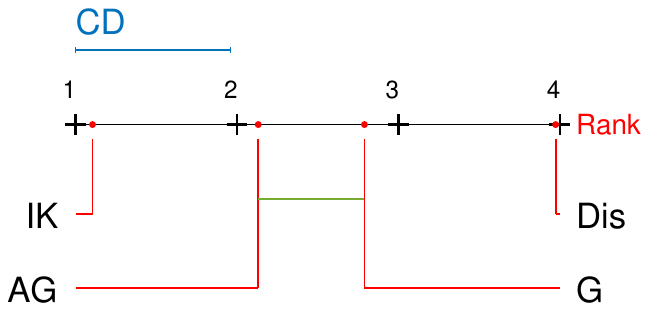}
         \caption{Weighted-linkage AHC}
         \label{fig:weighted}
     \end{subfigure}
     \begin{subfigure}[b]{0.3\textwidth}
         \centering
         \includegraphics[width=\textwidth]{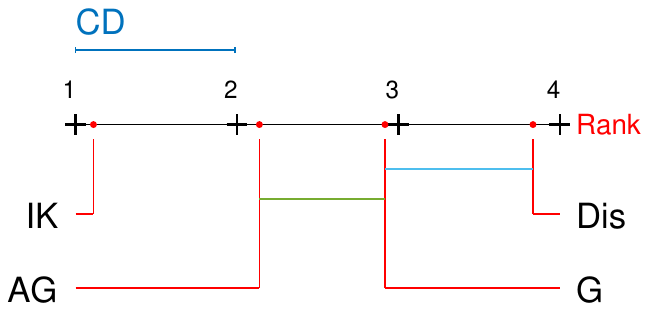}
         \caption{HDBSCAN}
         \label{fig:hdbscan}
     \end{subfigure}
     \begin{subfigure}[b]{0.3\textwidth}
         \centering
         \includegraphics[width=\textwidth]{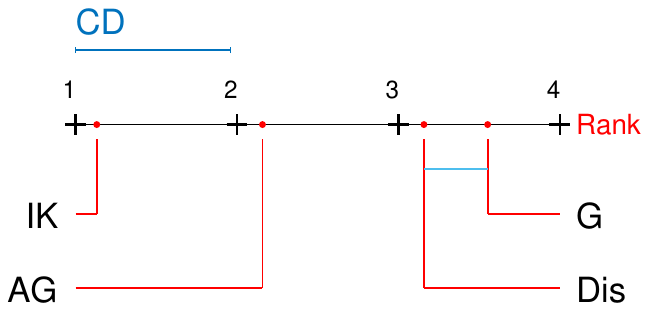}
         \caption{GDL}
         \label{fig:gdl}
     \end{subfigure}
     \begin{subfigure}[b]{0.3\textwidth}
         \centering
         \includegraphics[width=\textwidth]{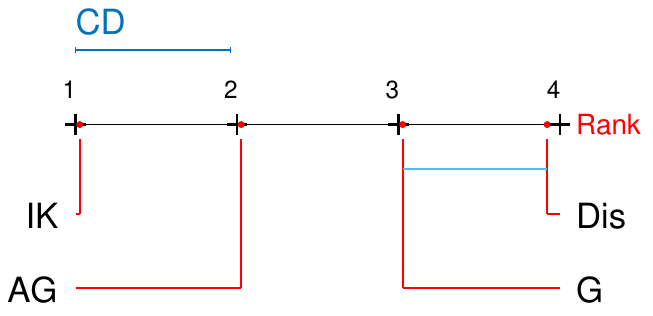}
         \caption{PHA}
         \label{fig:pha}
     \end{subfigure}
        \caption{Critical difference (CD) diagram of the post-hoc Nemenyi test ($\alpha=0.10$) for $F1$ scores. Two measures are not significantly different if there is a line linking them.}
        \label{fig:Nemenyitest}
\end{figure}

We also conduct a Friedman test with the post-hoc Nemenyi test to examine whether the difference in $F1$ scores of any two measures is significant.
As shown in Figure~\ref{fig:Nemenyitest},
IK is significantly better than all other measures for every algorithm.
This result provides further evidence of the superiority of IK wrt the clustering performance reported in the previous study where IK improves DBSCAN clustering results on datasets with varied densities \citep{qin2019nearest}.  

 \subsection{Computational complexity comparison}

 Table \ref{tab:complexity} compares the time and space complexities of different clustering algorithms and similarity measures.\footnote{Here we use the distance matrix as the input for each algorithm to save running time. When data points are available, the space complexity of all algorithms can be reduced to $\mathcal{O}(n)$. } Basically, all four measures have similar computational complexities. The runtime comparison on four datasets is shown in Table \ref{tab:runtime}. Note that the runtime of IK is slightly higher than the other two kernels because it is an ensemble method. However,
 this is not an issue because the Voronoi diagram implementation of IK is amenable to GPU acceleration \citep{qin2019nearest}.

\begin{table}[!htb]
  \centering
  \caption{Time and space complexities of AHC algorithms and distance/kernel functions. $t$ and $\psi$ in IK are the ensemble size and subsample size, respectively. 
}
    \begin{tabular}{|c|c|c|}
    \hline
    Algorithm & Time complexity & Space complexity\\
    \hline
    Single-linkage AHC & $\mathcal{O}\left(n^{2}\right)$  & $\mathcal{O}\left(n^{2}\right)$ \\
    Complete-linkage AHC  & $\mathcal{O}\left(n^{2}\log n\right)$  & $\mathcal{O}\left(n^{2}\right)$ \\
    GDL & $\mathcal{O}\left(n^{3}\right)$ &  $\mathcal{O}\left(n^{2}\right)$\\
    PHA & $\mathcal{O}\left(n^{2}\right)$ & $\mathcal{O}\left(n^{2}\right)$ \\
     Average  & $\mathcal{O}\left(n^{2}\log n\right)$  & $\mathcal{O}\left(n^{2}\right)$ \\
    Weighted  & $\mathcal{O}\left(n^{2}\log n\right)$  & $\mathcal{O}\left(n^{2}\right)$ \\
    HDBSCAN & $\mathcal{O}\left( n^{2}\right)$ & $\mathcal{O}\left( n^{2}\right)$\\
\hdashline
Distance or GK  & $\mathcal{O}\left(n^{2}\right)$ & $\mathcal{O}\left(n^{2}\right)$ \\
AGK &    $\mathcal{O}\left(n^{2}\right)$ & $\mathcal{O}\left(n^{2}\right)$ \\
IK &  $\mathcal{O}\left( tn\psi+n^{2}\right)$ &  $\mathcal{O}\left(t \psi+ n^{2}\right)$  \\
    \hline
    \end{tabular}%
  \label{tab:complexity}%
\end{table}%

\begin{table}[!htb]
  \centering
  \caption{Execution time (in CPU seconds) on a machine with an i7-7820X 3.60GHz processor and 32GB RAM} 
    \begin{tabular}{|c|rrrr|rrrr|}
    \hline
    \multirow{2}[4]{*}{Dataset} & \multicolumn{4}{c|}{Single-linkage AHC} & \multicolumn{4}{c|}{HDBSCAN} \\
\cline{2-9}      & \multicolumn{1}{c}{Dis} & \multicolumn{1}{c}{GK} & \multicolumn{1}{c}{AGK} & \multicolumn{1}{c|}{IK} & \multicolumn{1}{c}{Dis} & \multicolumn{1}{c}{GK} & \multicolumn{1}{c}{AGK} & \multicolumn{1}{c|}{IK} \\ \hline
    WDBC  & .00 & .01 & .01 & .08 & .03 & .04 & .04 & .07 \\
    Banknote & .02 & .05 & .06 & .30 & .13 & .15 & .20 & .32 \\
    Segment  & .06 & .13 & .20 & .93  & .44 & .48 & .61 & .98 \\
    Spam   & .28 & .59 & .86 & 3.57 & 1.76 & 1.86 & 2.45 & 3.87 \\
    \hline
    \end{tabular}%
  \label{tab:runtime}%
\end{table}%

\newpage

\section{Conclusions}~\label{sec:conclusion}

We formally establish the condition with which a linkage function must comply before it would allow an AHC to successfully separate clusters in a dataset. We also formally define a concept called {\em entanglement} in a dendrogram to explain the severity of linking across different clusters during the merging process in the AHC. Two indicators, i.e., the number of entanglements and the average entanglement level, are shown to be highly correlated to an objective measure of goodness of dendrogram called dendrogram purity.

These formal definitions have allowed us to analyse an often overlooked/ignored bias in T-AHC: existing T-AHC to have a bias towards linking points in dense cluster first, before linking points in the sparse cluster.

As we contend that the root cause of this bias is due to the distance/similarity used being data-independent, the use of a well-defined \emph{data-dependent kernel called Isolation Kernel} has been shown to reduce this bias significantly.

While the analysis was conducted with respect to T-AHC only,
we propose to use Isolation Kernel to replace distance in existing distance-based AHC algorithms as a generic approach to improve their dendrograms. This approach differs from existing approaches which focus on a tailored-made linkage function for a specific algorithm. We show that the proposed approach works for four existing clustering algorithms without the need to modify their linkage functions or algorithms, except the replacement of distance with Isolation Kernel.

Our empirical evaluation verifies that Isolation Kernel is a better measure than distance and two existing popular kernels, Gaussian  Kernel  and  adaptive  Gaussian  Kernel, on four AHC algorithms, i.e., T-AHC, HDBSCAN \citep{campello2013density}, GDL~\citep{zhang2012graph} and PHA~\citep{lu2013pha}.





\clearpage

%
\bibliographystyle{elsarticle-num}
\bibliography{reference}

\clearpage


\appendix

\section{Interpretation of HDBSCAN as an AHC algorithm}
\label{appendA}
HDBSCAN~\citep{campello2013density} can be interpreted as a new kind of AHC algorithm which relies on a density-based linkage function, i.e., an AHC with  single-linkage linkage function and a particular dissimilarity measure. It uses the single-linkage function based on reachability-distance to merge two subclusters, motivated by the density-based clustering algorithm DBSCAN \citep{ester1996density}. The reachability-distance is defined as
\begin{equation*}
    d_{kReach}(x,y)=\max\{dist(x,y), dist_k(x), dist_k(y)\}
    \label{reach}
\end{equation*}
\noindent where $dist_k(x)$ is the distance between $x$ and $x$'s $k$-1-th nearest neighbour.




The linkage function of HDBSCAN is defined as:

\begin{equation*}
    \check{h}(C_i,C_i)=  \min_{x\in C_i, y\in C_j} d_{kReach}(x,y)
\end{equation*}




Similar to T-AHC,
HDBSCAN gradually increases the reachability-distance as the height on the dendrogram to merge two most similar subclusters (based on the above linkage function) iteratively. This can be interpreted as: the larger the reachability-distance, the large number of points are linked~\footnote{In practice, HDBSCAN uses a more efficient way to compute the dendrogram based on the splitting on the Minimum Spanning Tree \citep{jain1988algorithms}.}.

Unlike T-AHC,
HDBSCAN uses a dynamic programming method to set different thresholds on  the dendrogram to extract optimal clusters with varied densities. Furthermore, clusters with the number of points less than a user-specified $c$ will be ignored and treated as noise, after the dendrogram building process.
Since HDBSCAN is a density-based clustering algorithm, it can detect arbitrarily shaped clusters and identify noise in the dataset.


To kernelise HDBSCAN, the kernel-based linkage function is defined as:
\begin{equation*}
     \check{\hslash}(C_i,C_j)= \max_{x\in C_i y\in C_j}  \min\{K(x,y), K_k(x), K_k(y)\}
\end{equation*}
\noindent where  $K_k(x)$ is the similarity between $x$ and $x$'s $k$-1-th most similar neighbour.

\section{Kernelised  PHA}~\label{sec:appendB}
The PHA~\citep{lu2013pha} converts the distance
between data points into potential values to measure the similarity between clusters.
Suppose that there is a data set with $n$ data points, denoted as $X = \{x_1, x_2, \dots x_n\}$.
The distance between two data points $x_i$ and $x_j$ is denoted as $dist(x_i,x_j)$.
The potential value of point $x_i$ received from point $x_j$ is calculated by
\begin{equation*}
\Phi_{x_{i}, x_{j}}=\left\{\begin{array}{ll}
{-\frac{1}{dist\left(x_{i}, x_{j}\right)}} & {\text { if } dist\left(x_{i}, x_{j}\right) \geq \lambda} \\
{-\frac{1}{\lambda}} & {\text { if } dist\left(x_{i}, x_{j}\right)<\lambda}
\end{array}\right.
\end{equation*}
where the parameter $\lambda$ is used to avoid the singularity problem
when $dist(x_i, x_j)$ is too small.

The total potential value of a
data point $x_a$ is defined as the sum of the potential values it
has received from all the other data points
\begin{equation*}
\Phi_{x_{a}}=\sum_{i=1, i \neq a}^{n} \Phi_{x_{a}, x_i}
\end{equation*}

The linkage function of PHA is defined as
\begin{equation*}
h(C_1 , C_2) = dist(s_1,s_2)
\end{equation*}
where $s_1$ and $s_2$ are from these two clusters respectively and be determined by:
\begin{equation*}
\begin{aligned}
&\text { If } C_{2} \leq C_{1},\left\{\begin{array}{l}
{s 1=\underset{k}{\operatorname{argmin}}\left(\Phi_{k} | k \in C_{1}\right)} \\
{s 2=\underset{k}{\operatorname{argmin}}\left(dist(k, s1) |\left(k \in C_{2}\right) \operatorname{AND}\left(\Phi_{k} \leq \Phi_{s 1}\right)\right)}
\end{array}\right.\\
&\text { If } C_{1} \leq C_{2},\left\{\begin{array}{l}
{s 2=\underset{k}{\operatorname{argmin}}\left(\Phi_{k} | k \in C_{2}\right)} \\
{s 1=\underset{k}{\operatorname{argmin}}\left(dist(k, s1) |\left(k \in C_{1}\right) \operatorname{AND}\left(\Phi_{k} \leq \Phi_{s 2}\right)\right)}
\end{array}\right.
\end{aligned}
\end{equation*}
where $C_{i} \leq C_{j}$ means $\exists x \in C_{i}\left(\forall y \in C_{j}\left(\Phi_{i} \leq \Phi_{j}\right)\right)$.

To kernelise the PHA, the potential value of point $x_i$ received from point $x_j$ is calculated by
\begin{equation*}
\hat{\Phi}_{x_{i}, x_{j}}=\left\{\begin{array}{ll}
{-\frac{1}{1 - K\left(x_{i}, x_{j}\right)}} & {\text { if } K\left(x_{i}, x_{j}\right)\leq \lambda} \\
{-\frac{1}{\lambda}} & {\text { if } K\left(x_{i}, x_{j}\right)>\lambda}
\end{array}\right.
\end{equation*}

The total potential value of a data point $x_a$ is defined as
\begin{equation*}
\hat{\Phi}_{x_{a}}=\sum_{i=1, i \neq a}^{n} \hat{\Phi}_{x_{a}, x_i}
\end{equation*}

The kernel-based linkage function for PHA is defined as
$\hslash(C_1,C_2) = 1- K(s1,s2)$
where $s_1$ and $s_2$ are from these two clusters respectively and be determined by:
\begin{equation*}
\begin{aligned}
&\text { If } C_{2} \leq C_{1},\left\{\begin{array}{l}
{s 1=\underset{k}{\operatorname{argmin}}\left(\hat{\Phi}_{k} | k \in C_{1}\right)} \\
{s 2=\underset{k}{\operatorname{argmax}}\left(K(k, s1) |\left(k \in C_{2}\right) \operatorname{AND}\left(\hat{\Phi}_{k} \leq \hat{\Phi}_{s 1}\right)\right)}
\end{array}\right.\\
&\text { If } C_{1} \leq C_{2},\left\{\begin{array}{l}
{s 2=\underset{k}{\operatorname{argmin}}\left(\hat{\Phi}_{k} | k \in C_{2}\right)} \\
{s 1=\underset{k}{\operatorname{argmax}}\left(K(k, s1) |\left(k \in C_{1}\right) \operatorname{AND}\left(\hat{\Phi}_{k} \leq \hat{\Phi}_{s 2}\right)\right)}
\end{array}\right.
\end{aligned}
\end{equation*}

\section{Kernelised GDL}~\label{sec:appendC}
The graph degree linkage (GDL) algorithm~\citep{zhang2012graph} begins with a number of initial small clusters, and iteratively merge two clusters with the maximum  similarity.
Suppose that there is a data set with $n$ data points, denoted as $X = \{x_1, x_2, \dots x_n\}$.
The similarities are computed using the product of the average indegree and average outdegree in a $KNN$ graph, in which the vertices is $X$ and the weights for edges are defined as:
\begin{equation}
w_{i j}=\left\{\begin{array}{ll}
{\exp \left(-\frac{d i s t(i, j)^{2}}{\sigma^{2}}\right),} & {\text { if } x_{j} \in \mathcal{N}_{i}^{K}} \\
{0,} & {\text { otherwise }}
\end{array}\right.
\end{equation}
where $dist(i,j)$ is the distance between $x_i$ and $x_j$, $\mathcal{N}_{i}^{K}$ is the set of K-nearest neighbours of $x_i$, and $\sigma^{2}=\frac{a}{n K}\left[\sum_{i=1}^{n} \sum_{x_{j} \in \mathcal{N}_{i}^{K}} dist(i, j)^{2}\right]$ . $K$ and $a$ are free
parameters to be set.

Given a vertex $i$, the average indegree from and the average outdegree
to a cluster $C$ is defined as $\operatorname{deg}_{i}^{-}(\mathcal{C})=\frac{1}{|\mathcal{C}|} \sum_{j \in \mathcal{C}} w_{j i}$ and $\operatorname{deg}_{i}^{+}(\mathcal{C})=\frac{1}{|\mathcal{C}|} \sum_{j \in \mathcal{C}} w_{i j}$, respecitvely, where $|C|$ is the cardinality of $C$.

The similarity between two clusters is defined as the product
of the average indegree and average outdegree.
\begin{equation}
\mathcal{A}_{\mathcal{C}_{b}, \mathcal{C}_{a}}=\sum_{i \in \mathcal{C}_{b}} \operatorname{deg}_{i}^{-}\left(\mathcal{C}_{a}\right) \operatorname{deg}_{i}^{+}\left(\mathcal{C}_{a}\right) + \sum_{i \in \mathcal{C}_{a}} \operatorname{deg}_{i}^{-}\left(\mathcal{C}_{b}\right) \operatorname{deg}_{i}^{+}\left(\mathcal{C}_{b}\right)
\end{equation}

To kernelise the GDL, simply replacing the distance with a kernel in building the K-NN graph.
The weights of K-NN graph are defined as
\begin{equation}
\hat{w}_{i j}=\left\{\begin{array}{ll}
{\exp \left(-\frac{(1-K(i, j))^{2}}{\sigma^{2}}\right),} & {\text { if } x_{j} \in \mathcal{N}_{i}^{K}} \\
{0,} & {\text { otherwise }}
\end{array}\right.
\end{equation}
where $\mathcal{N}_{i}^{K}$ is the set of K-nearest neighbours of $x_i$.

\end{document}